\documentclass{article}
\usepackage{amsmath}
\usepackage{hyperref}
\usepackage{amsfonts}
\usepackage{amssymb}
\usepackage{graphicx} 
\usepackage{hyperref}
\usepackage[normalem]{ulem}
\usepackage{authblk} 
\usepackage{tikz}

\newcommand{\be}{\begin{equation}}
\newcommand{\ee}{\end{equation}}
\title{Calculating Mutual Information between a Reward Maximizer and its Environment}

\author[$\dagger$,1]{Alfred Harwood}
\author[$\dagger$,1,2]{Jose Faustino}
\author[1]{Alex Altair}
\affil[$\dagger$]{Equal contribution}
\affil[1]{Dovetail Research}
\affil[2]{University of Sao Paulo}

\date{\today{}}

\usepackage{amsthm}

\theoremstyle{plain} 
\newtheorem{theorem}{Theorem}[section] 
\newtheorem{lemma}[theorem]{Lemma}     

\theoremstyle{definition}
\newtheorem{definition}{Definition}[section]

\theoremstyle{remark}
\newtheorem*{remark}{Remark} 

\begin{document}

\maketitle
\begin{abstract}
    An important question in the field of AI is the extent to which successful behaviour requires an internal representation of the world. In this work, we quantify the amount of information an optimal policy provides about the underlying environment. We consider a Controlled Markov Process (CMP) with $n$ states and $m$ actions, assuming a uniform prior over the space of possible transition dynamics. We prove that observing a deterministic policy that is optimal for any non-constant reward function then conveys exactly $n \log m$ bits of information about the environment. Specifically, we show that the mutual information between the environment and the optimal policy is $n \log m$ bits. This bound holds across a broad class of objectives, including finite-horizon, infinite-horizon discounted, and time-averaged reward maximization. These findings provide a precise information-theoretic lower bound on the ``implicit world model'' necessary for optimality.
    
    \vspace{1em} 
    \noindent\textbf{Keywords:} Reinforcement Learning, Information Theory, Implicit World Models, AI Safety.
\end{abstract}

\pagebreak

\section{Introduction}
\subsection{Does achieving goals require a world model?}
In the field of AI, it is common to refer to an AI agent as having a ``world model"\cite{ding2025understanding}. By this is normally meant that the agent internally contains a model of the world that the agent is interacting with, that it can use to make predictions about the world. Such models are generally only partial or imperfect, but still a good enough approximation of the world to be useful.\\

If an agent has access to a perfect world model, then (tractability aside) it can achieve optimal performance by simply querying the world model for the outcome of each possible action, and picking the one with the best outcome. But we can also ask if the relationship goes both ways. Is there a sense in which a good performance \emph{requires} a world model? This question is sometimes framed as one component of the `agent-like structure problem' \cite{altair2024towards}.\\

One motivation for asking this question is that the behaviour of machine learning systems is notoriously fragile and hard to predict. ``Out-of-distribution" generalization is a major outstanding issue. We hope that developing formal results that let us deduce world model accuracy from only observations of performance might help us to understand when we could or couldn't trust our system's performance.\\

While it is clear that some tasks do not require a full model of the world, it also seems clear that a wide variety of goals require an agent to contain some information about its world. In this paper, we operationalize this question by asking: `if we are given an optimal policy for a particular goal, how much information about the world can be extracted from it?'. \\

This framing sidesteps questions of the structure and architecture of the agent enacting the policy and focuses purely on the information-theoretic content of the policy. As such, our result makes no claims about the form or implementation that an internal world model must take, and instead focuses on how much information it must contain. \\

Following \cite{richens2025general}, we investigate this question in a setting where the environment is a controlled Markov processes. In our result the `goal' which an agent must achieve is captured by a reward function over states which is summed over time. The degree to which this framing captures a fully general notion of `goals' is subject to some debate. For example, \cite{bowling2023settling} discusses the circumstances under which this `reward hypothesis' applies, comparing and relating it to the Von Neumann Morgenstern utility theorem \cite{von2007theory}. Nonetheless, the summing of rewards over time is a very common way of framing goals in reinforcement learning,\cite{sutton1998reinforcement} and it is the approach we will use. \\

Starting with maximal ignorance about the environment means not knowing any of the individual transition probabilities. We represent this state of ignorance using a uniform probability density function over all possible environments. We then consider how our subjective probability distribution over possible environments changes when we learn that a particular policy is optimal for maximizing a specific value function in the environment. Any particular policy will only be optimal in a small subset of possible environments, so that learning this policy gives us information about the environment. In this sense, information about the environment is contained within the optimal policy. \\

We quantify this using the mutual information between the environment and the policy of an optimal reward maximiser. In particular, we show that optimal reward-maximising policies contain $n \log m$ bits of information about the world, where $n$ is the number of environment states, and $m$ is the number of actions that the policy is able to take in each state. This holds for any non-constant reward functions and holds for all common reward aggregation methods including time-discounted, finite-time, and time-averaged. 
\subsection{Related work}
In former decades, techniques of ``Good Old-Fashioned AI" favoured inputting explicit representations of the world, as chosen and designed by the human engineers. In \cite{brooks1991intelligence}, Brooks influentially argued that such representations were ineffective and unnecessary, claiming that ``the world is its own best model''. In `The Bitter Lesson' \cite{sutton2019bitter}, Sutton argued that techniques which leveraged compute for learning (and search) reliably beat engineers' attempts to deliberately encode their knowledge of the domain into ML system. Both of these however leave open the possibility that successful systems are relying on \emph{implicit} world models. At the same time, \cite{ha2018world} was influential in showing several examples where learning separate world models was effective for training good policies. \\


An early theorem that moves toward addressing our question outside of the context of AI appeared in \cite{touchette2004information}. In this paper, the authors derive a bound on the entropy reduction achievable by a controller in terms of its mutual information with the environment. While this result is illuminating, the model it considers is simplified since it only considers goals characterised by entropy reduction over a single timestep. In our work, we consider reward maximisation over multiple timesteps for a much more general framework for quantifying goal-achieving behaviour. \\

Recently, the Causal Incentives Working Group published two papers investigating versions of this problem \cite{richens2024robust, richens2025general}. The paper \cite{richens2025general} shows that, if a policy is general enough to achieve a wide range of goals, then one can extract from the policy's actions an epsilon-approximate description of the world with which it is interacting. That is to say, if you have access to the policy (and the knowledge that it is well-performing) then you can extract a world model. In this sense, the policy contains an implicit world model. Various extensions to this result, including stochastic policies and partially observable environments are investigated in \cite{santiago} and \cite{nayebi2026capable}. A similar result was shown in the setting of causal Bayesian networks in \cite{richens2024robust}.\\

There are some important assumptions made in \cite{richens2025general} which we wish to develop and relax. First, the policy is assumed to be general and `goal-conditioned' meaning that it can be fed multiple goals and achieve good performance on all of them. In particular, the agent must be general enough to succeed at a particular class of goals which the authors show is equivalent to modelling the individual transition probabilities of the environment. While this is valid as an assumption, the class of goals feels unnatural and must be expressed using Linear Temporal Logic. Here, we consider the more natural class of goals represented by reward maximization, which is standard in reinforcement learning. Furthermore, we only require that a policy is optimal for maximizing a single reward function, rather than requiring our policy to achieve general good performance on a wide range of goals. Since such policies do not contain the full information about the environment, we are able to discuss and quantify the amount of information about the world contained in the policy. Thus, our result finds natural application to finite agents with partial world models. \\

\section{Setup} \label{setup-section}
\subsection{Environments} \label{env-section}
Following \cite{richens2025general}, we consider an environment modelled by a Controlled Markov Process (CMP) with a set of $n$ states $S =\{s_1,s_2,\dots s_n\}$ and a set of $m$ actions $A =\{a_1,a_2,\dots a_m\}$ which can be taken in each state. The CMP environment is characterized by a set of transition probabilities of the form $P_{s_i s_j}(a_k)$ which specify the probability that taking an action $a_k\in A$ while in state $s_i\in S$ results in a transition to state $s_j\in S$. Time is taken to be discrete. At each timestep, a policy acting in the environment can take a single action, causing the system to evolve stochastically to its subsequent state. \\

Since taking an action in a state must result in a transition to \emph{some} state, we have the normalization conditions that $\sum_j P_{s_i s_j}(a_k) =1 \,\, \forall i,k$. Since there are $m$ possible actions and $n$ possible initial states, the environment lives in a space characterized by the product of $nm$ simplexes each with dimension $(n-1)$, making it a closed subset of $\mathbb{R}^{nm(n-1)}$. We use $\mathcal{X}$ to denote this space and use $x$ to denote a point in this space, corresponding to a full list of transition probabilities which fully characterizes a particular environment. \\
\begin{figure}
    \centering
    \includegraphics[width=0.97\linewidth]{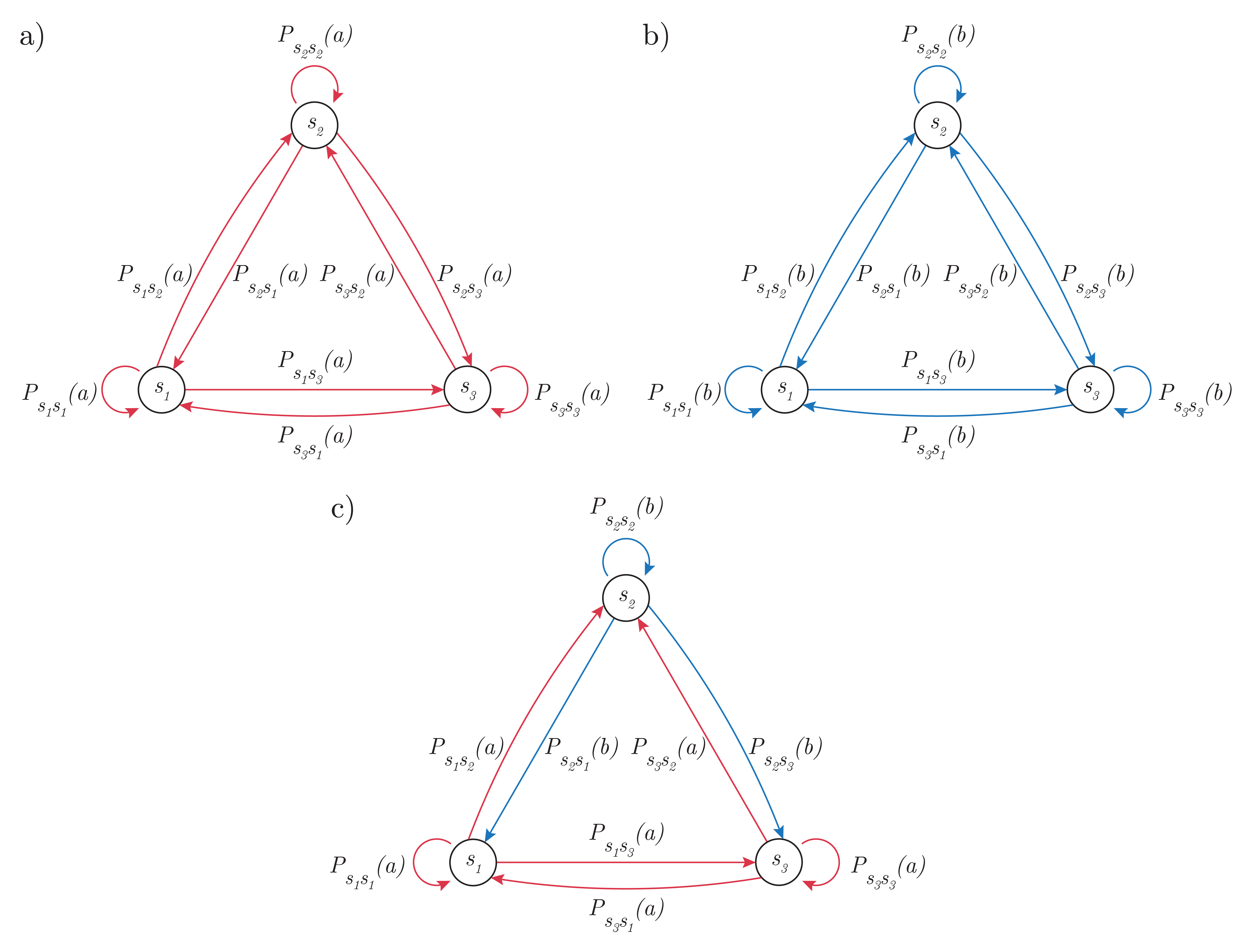}
    \caption{An example of the Markov chains resulting from three different policies in a three-state, two-action ($n=3,m=2$) Controlled Markov Process. Figure a) shows the transition probabilities when the policy takes action a in all states (ie. $\pi(s_i) = a, \forall i$). Figure b) shows the transition probabilities when the policy takes action b in all states (ie. $\pi(s_i) = b, \forall i$). Figure c) shows the transition probabilities between states for the policy which takes action $a$ in state $s_1$, action $b$ in state $s_2$ and action $a$ in state $s_3$.}
    \label{fig:CMPs}
\end{figure}

\subsection{Policies} \label{pol-section}
Acting in this environment we consider an agent using a deterministic, Markovian policy. We assume that the environment is fully observable, meaning that the agent can observe the exact state that it is in. The actions taken by such a policy are therefore entirely determined by the current environmental state meaning that it can be characterized by a deterministic function $\pi: S \rightarrow A$ which takes the current state as an input and outputs an action. Note that there are $m^n$ possible policies of this form. We will use an indexing over these policies and label the $i$-th policy as $\pi_i$ where $i$ can run from from $1$ to $m^n$. We will use the notation $\mathcal{I}=\{1,2,...m^n\}$ to denote the set of indices.\\

Notice that a deterministic, Markovian policy $\pi$ acting in an environment $x$ causes the system to behave as a stationary Markov process characterised by a transition matrix $M_{\pi}(x)$ which depends both on the policy $\pi$ and the environment $x$. The elements of the transition matrix $[M_{\pi}(x)]_{ij}$ correspond to the probability that the action taken by policy $\pi$ in environment $x$ leads to state $s_j$ transitioning to state $s_i$. In the notation from before, we can write $[M_{\pi}(x)]_{ij} = P_{s_j s_i}(\pi(s_j))$. An example CMP is shown in Figure \ref{fig:CMPs}.
\subsection{Rewards}
In our setup, goals are characterised through the maximisation of the expected reward signal over states. This reward is captured using a function $r:S \to (0,1)$. The only restriction we place on the reward function is that it is \emph{non-constant}, meaning that there is at least one pair of states $s_i$ and $s_j$ such that $r(s_i)>r(s_j)$. We use the terminology `value function' to refer to any way of accumulating this reward over multiple timesteps. There are multiple ways of doing this. We consider time-discounted summation of expected reward over both finite and infinite time-horizons and time-averaged summation of expected reward as represented in equations (\ref{eq:discountvalue}), (\ref{eq:finitevalue}) and, (\ref{eq:timeaverage}) respectively. We denote the value function evaluated in environment $x$ with policy $\pi$ as $V_{\pi}(x)$ and let the context denote the type of reward accumulation used.\\

We say that a policy $\pi_o$ is optimal for a value function in an environment $x$ if $V_{\pi_o}(x) \geq V_{\pi_i}(x) \forall i \in \mathcal{I}$ and we say that $\pi_o$ is \emph{uniquely} optimal if $V_{\pi_o}(x) > V_{\pi_i}(x) \forall i \in \mathcal{I}$. Sometimes we will refer to the evaluation of $V_\pi(x)$ as the `performance' of policy $\pi$ in environment $x$.
\subsection{Mutual Information}
We treat the environment as a random variable $X$ which takes values from the set $\mathcal{X}$. Maximum ignorance of the environment is characterized by a uniform probability density function over $\mathcal{X}$ given by $p(x) = \frac{1}{V}$ where $V = \int_{x \in \mathcal{X}} d x$ is the volume of the space $\mathcal{X}$. \\ 

The differential entropy $h$ will be used to quantify the amount of information we have about the environment $X$:
\be
    h(X) = -\int_{x \in \mathcal{X}} p(x) \log p(x) d x \, .
\ee
Initially, when $p(x)$ is a uniform probability density function (PDF) the differential entropy of $X$ is maximal and we have
\be
    h(X) = -\int_{x \in \mathcal{X}} p(x) \log p(x) d x =  -\int_{x \in \mathcal{X}} \frac{1}{V} \log \frac{1}{V} d x =  \log V \, .
\ee
In the main body of the paper, we will assume that our prior over environments $p(x)$ is uniform, however our results also apply to any prior which is invariant up to a re-labelling of actions (see Appendix \ref{ap:generalprior} for more details).
If $\Pi$ is the random variable corresponding to the optimal policy in a particular environment then knowing the optimal policy reduces the differential entropy of the environment to $h(X|\Pi)$. The expected difference between these quantities is the mutual information:
\be
    I(X;\Pi) = h(X) - \sum_{i=1}^{m^n} h(X|\Pi=\pi_i) P(\Pi = \pi_i)\, . \label{eq:mutualinformation}
\ee
Notice that, while differential entropy can depend on the choice of coordinates \cite{cover1999elements}, the mutual information between $X$ and $\Pi$ is an invariant quantity, since it can be expressed as a KL-divergence. Since all our results are in terms of mutual information, issues surrounding the coordinate-dependence of differential entropy are avoided. \\

In Section \ref{sec:average} on time-averaged rewards, we will exclude $\partial \mathcal{X}$ the boundary points of $\mathcal{X}$ and just consider the interior $\text{int(}{\mathcal{X})}$. This boundary of $\mathcal{X}$ includes all points where at least one transition probability is exactly zero. Excluding these environments allows us to assume that the environment is irreducible and aperiodic. But, since $\partial \mathcal{X}$ has zero Lebesgue measure we can exclude it without affecting any of integrals over $\mathcal{X}$ used when calculating the differential entropy (see Figure \ref{fig:Xspace}).

\section{Results} 

A state of complete ignorance about the environment variable $X$ is represented by a uniform PDF over $\mathcal{X}$ with maximal differential entropy, equal to $\log{V}$. If we obtain some information about the environment, then our PDF will change, leading to a lower differential entropy. The amount of information we have obtained about the environment will be quantified by the expected change in differential entropy which is the mutual information $I(X;\Pi)$. In this paper, we ask: how much information do we gain about the environment if we learn that a particular policy is optimal for some task within that environment?\\

Suppose that we had a goal such that in every environment, except for a measure zero subset of $\mathcal{X}$, there was exactly one optimal policy. Furthermore, assume that the volume of environments for which a particular policy $\pi_i$ was optimal was equal for all the $m^n$ different policies $\pi_i$. This would mean that we could partition $\mathcal{X}$ (the space of possible environments $\mathcal{X}$) into $m^n$ equal volume subsets depending on which policy was optimal in each subset (see Figure \ref{fig:Xspace}). In other words, if $\mathcal{X}_i \subset \mathcal{X}$ denotes the set of environments where a policy $\pi_i$ is optimal then $\bigcup_{i}\mathcal{X}_i = \mathcal{X}$ and $\int_{x \in \mathcal{X}_i} d x = \frac{V}{m^{n}}$ for all $i$. In this situation, we have the following lemma:\\

\begin{lemma}\label{lem:nlogm}
    Consider a uniform prior over environments $p(x)$. Let $X$ be the random variable representing the environment and governed by the PDF $p(x)$. \\

    Assume a goal represented by the maximization of a value function $V_\pi(x)$. We say that a policy $\pi_o$ is optimal for this value function in environment $x$ if $V_{\pi_o}(x)\geq V_{\pi_i}(x), \forall i$. Let \[\mathcal{X}_i = \{x \in \mathcal{X} \mid \pi_i \text{ is optimal in } x\}\, .\] 
    If 
    \begin{enumerate}
        \item $(\mathcal{X}_i)_{i \in \{1,...,m^n\}}$ is a partition for almost every environment - i.e, the set $ \mathcal{X}_i \cap \mathcal{X}_j$, has measure zero for all $i,j$. In other words, there is exactly one optimal policy for almost all environments
        \item $\mathcal{X}_i$ and $\mathcal{X}_j$ have the same volume, for any $i,j \in I$
    \end{enumerate}
    Then, 
    \begin{enumerate}
    \setcounter{enumi}{2}
        \item  $h(X)-h(X\mid \Pi=\pi_i) = n\log m \,,\,\forall i$, i.e, knowing which policy is optimal reduces the differential entropy of the environment by $n\log m$ bits. The mutual information between the environment and the optimal policy is therefore 
    \end{enumerate}
        \be I(X:\Pi) = n \log m \, .
        \ee
        
\end{lemma}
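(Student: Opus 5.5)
The plan is to compute the conditional densities directly. First I fix what $\Pi$ means: by hypothesis 1, for almost every $x$ exactly one index $i$ has $x\in\mathcal{X}_i$. So $\Pi$ is well defined almost everywhere as the policy indexed by that $i$. On the null set where several policies are optimal, any measurable tie-breaking rule can be used, since it changes no integral below.

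Next, write $V_i=\int_{x\in\mathcal{X}_i}dx$. Hypothesis 1 says the overlaps are null, so $\sum_i V_i = V$. Hypothesis 2 says all $V_i$ are equal, so $V_i = V/m^n$ for every $i$. Under the uniform prior this gives
\be
P(\Pi=\pi_i)=\int_{\mathcal{X}_i}\frac{1}{V}\,dx=\frac{V_i}{V}=\frac{1}{m^n}.
\ee
Bayes' rule then gives the conditional density: $p(x\mid \Pi=\pi_i)=p(x)/P(\Pi=\pi_i)=m^n/V$ for $x\in\mathcal{X}_i$, and $0$ otherwise. In other words, $X$ given $\Pi=\pi_i$ is uniform on $\mathcal{X}_i$.

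Now compute the conditional entropy, exactly as $h(X)=\log V$ was computed earlier:
\be
h(X\mid\Pi=\pi_i)=-\int_{\mathcal{X}_i}\frac{m^n}{V}\log\frac{m^n}{V}\,dx=\log\frac{V}{m^n}=\log V-n\log m.
\ee
Hence $h(X)-h(X\mid\Pi=\pi_i)=n\log m$ for every $i$, which is conclusion 3. Substituting into equation (\ref{eq:mutualinformation}) gives
\be
I(X;\Pi)=\log V-\sum_{i=1}^{m^n}\frac{1}{m^n}\bigl(\log V-n\log m\bigr)=n\log m.
\ee

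The computation is routine. The only delicate point is the first step: making $\Pi$ a genuine random variable when optimal policies are not unique on a null set, and using hypothesis 1 to get $\sum_i V_i=V$ rather than merely $\sum_i V_i \ge V$. I will handle this by noting that the union of all the overlaps $\mathcal{X}_i\cap\mathcal{X}_j$ is a finite union of null sets, hence null. Every $x$ lies in at least one $\mathcal{X}_i$, because some policy among the finitely many attains the maximum. Together these mean the $\mathcal{X}_i$ cover $\mathcal{X}$ up to measure zero, so the volumes add to $V$ and the densities above are valid almost everywhere.
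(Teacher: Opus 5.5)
Your proposal is correct and follows essentially the same route as the paper. The paper also uses the two hypotheses to give each $\mathcal{X}_i$ volume $V/m^n$, applies Bayes' rule to get the uniform density $m^n/V$ on $\mathcal{X}_i$, and computes $h(X\mid\Pi=\pi_i)=\log(V/m^n)$. Your extra care, namely checking that the $\mathcal{X}_i$ cover $\mathcal{X}$ because some policy always attains the maximum, and that tie-breaking on a null set changes no integral, makes explicit a step the paper only gestures at when it removes the null set $\mathcal{Z}$.
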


\begin{proof}
    Let the measure zero subset where there is more than one optimal policy be labelled $\mathcal{Z}\subset \mathcal{X}$. Removing this subset does not affect the volume so we have
    \be
        \int_\mathcal{X}dx = \int_{\mathcal{X}\ /\mathcal{Z}} dx = V \, .
    \ee
    The fact that each policy is optimal in an equal volume subset of $\mathcal{X}$ means that each policy is optimal for a volume $\frac{V}{m^n}$, which comes from equally dividing up the volume of $\mathcal{X}/\mathcal{Z}$. In this situation, if we learned which policy was optimal in an environment, this would tell us that the environment lay somewhere within a a volume $\frac{V}{m^n}$ of possible environments (a more formal explanation of this process using Bayes Theorem can be found in Appendix \ref{formalappendix}). This state of knowledge would be represented by PDF with value zero outside of the set $\mathcal{X}_i$ and a uniform distribution over the points inside $\mathcal{X}_i$:

\be
    p(x| \pi_i) = 
    \begin{cases}
        \frac{m^n}{V} \text{ when } x \in \mathcal{X}_i \\
        0 \text{ otherwise} 
    \end{cases} \, .
\ee

This PDF would have a differential entropy given by
\be
    h(x|\pi_i) =  -\int_{x \in \mathcal{X}_i} \frac{m^n}{V} \log \frac{m^n}{V}d x  = \log \frac{V}{m^n} \, .
\ee
The change in differential entropy which occurs upon learning the optimal policy is, in this situation:
\be
    h(x) - h(x|\pi_i) = \log V -\log\frac{V}{m^n} = n \log m \, .
\ee
\end{proof}

Thus, in this situation, learning the optimal policy gives us $n\log m$ bits of information about the environment. Since each policy results in the same reduction in differential entropy, we can trivially express this as mutual information between the environment and optimal policy if both are represented as random variables. Since $h(X|\pi_i)$ is the same for each $\pi_i$, the mutual information between $X$ and $\Pi$ is obtained from equation (\ref{eq:mutualinformation}):\\
\be
    I(X;\Pi) = h(X) - \sum_{i=1}^{m^n} h(X|\pi_i) P(\Pi = \pi_i) = n \log m \, .
\ee
\begin{figure}
    \centering
    \includegraphics[width=0.95\linewidth]{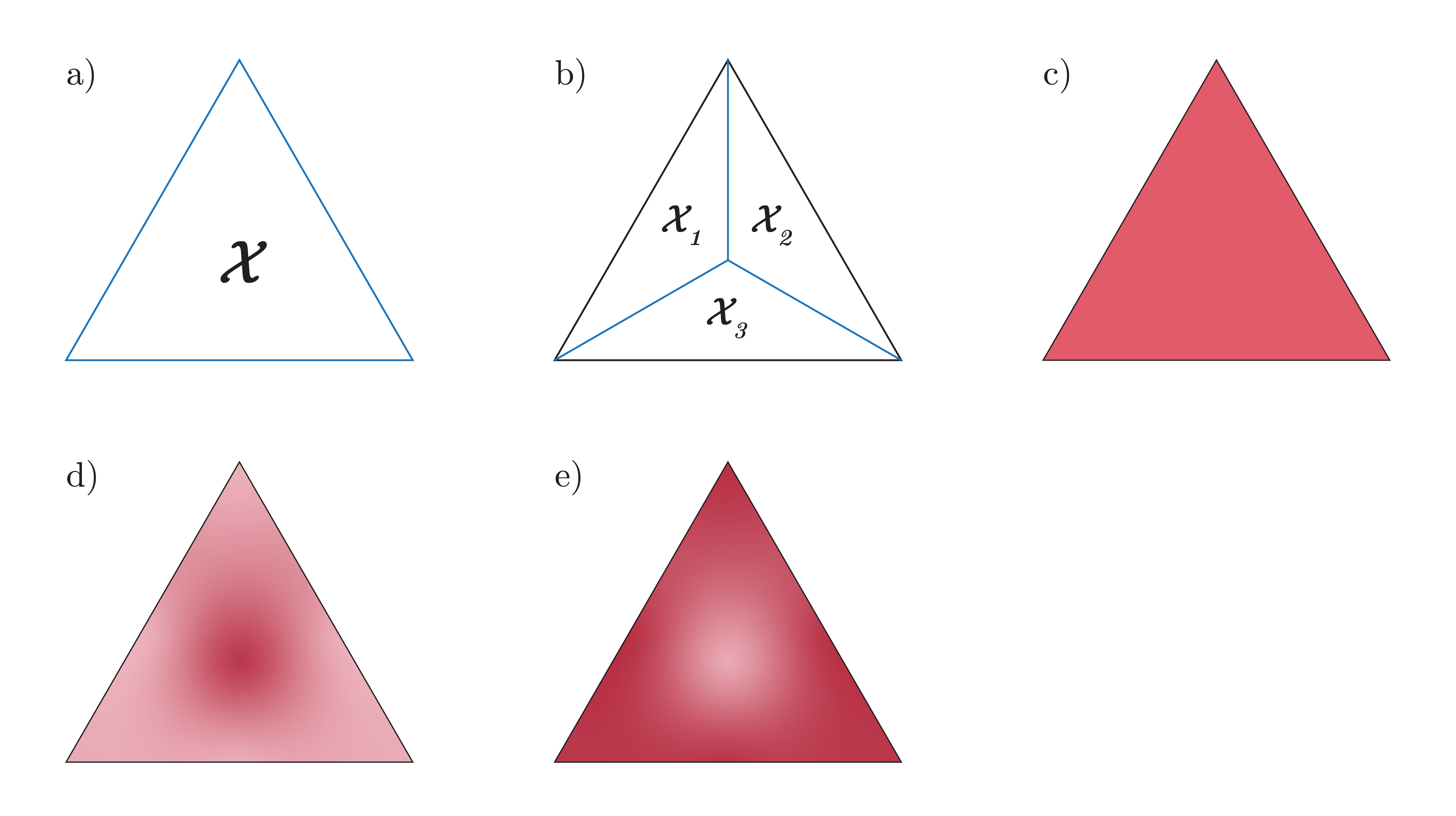}
    \caption{We consider a space of possible environments $\mathcal{X}$, here stylized as a triangle in Figure \ref{fig:Xspace}a). The boundary, highlighted in blue, is a measure zero subset which contains environments where at least one of the transition probabilities is zero. We can partition this space into equal volume subsets here labelled $\mathcal{X}_1, \mathcal{X}_2, \mathcal{X}_3$ (Figure \ref{fig:Xspace}b). The set of boundary points between these sets (again highlighted in blue) have measure zero. Starting from a uniform prior over possible environments (Figure \ref{fig:Xspace}c) learning in which of these partitions the true environment lies gives a number of bits of information equal to the logarithm of the number of cells in the partition (in this case $\log{3}$). In the main paper, we consider a uniform prior but in Appendix \ref{ap:generalprior} we extend the result to cover non-uniform prior that are symmetric with respect to the partition such as those shown in Figure \ref{fig:Xspace}d) and Figure \ref{fig:Xspace}e).}
    \label{fig:Xspace}
\end{figure}
Recall that in order to get this result, we had to make two important assumptions relating to the way in which we characterised `optimal' policies, relative to some goal. First, we assumed that there is exactly one optimal policy in each environment except a measure zero subset. We will call this `Assumption 1'. Second, we had to assume that each of the $m^n$ deterministic polices was optimal in an equal volume subset of $\mathcal{X}$. We will call this `Assumption 2'.\\

As it turns out, Assumption 2 is true for a very large class of goals. In particular, it is true for any goal involving the maximisation of a value function of the form $V_\pi(x)= k(M_{\pi}(x))$ where $M_\pi$ is the transition matrix of the Markov process over states which results from enacting a policy $\pi$ in environment $x$ and $k$ is a function which takes in a transition matrix and outputs real number. This means that the only way in which the policy influences the outcome of the value function is through its effect on the transition matrix. In this case, the following theorem applies.
\begin{theorem} \label{thm:equalvolumes}
    Let $V_\pi(x)$ be a value function which is only influenced by the policy through its effect on the transition matrix $M_\pi(x)$. Let $\mathcal{X}_i$ be the subset of environments for which policy $\pi_i$ is the unique optimal policy for maximising this value function. Then:
    \be
        \int_{\mathcal{X}_i} dx = \int_{\mathcal{X}_j} dx \quad \forall i,j
    \ee
    ie. all policies are uniquely optimal in an equal volume of environments.
\end{theorem}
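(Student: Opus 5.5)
The plan is a symmetry argument: for any two policies $\pi_i,\pi_j$ I will construct a volume-preserving bijection $\phi:\mathcal{X}\to\mathcal{X}$ with $\phi(\mathcal{X}_i)=\mathcal{X}_j$. The equality of volumes then follows from the change-of-variables formula, since the Jacobian determinant has absolute value one.

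The bijection is a state-dependent relabelling of actions. For each state $s$, pick a permutation $\sigma_s$ of $A$ with $\sigma_s(\pi_i(s))=\pi_j(s)$; a transposition, or the identity when the two actions coincide, will do. Define $\phi$ by
\be
[\phi(x)]_{s s'}(\sigma_s(a)) = P_{s s'}(a) \quad \forall s,s'\in S,\ a\in A ,
\ee
where $P$ denotes the transition probabilities of $x$. So $\phi$ moves the probability vector attached to the pair $(s,a)$ to the pair $(s,\sigma_s(a))$. I will then check three things.
\begin{enumerate}
\item $\phi$ maps $\mathcal{X}$ to itself. Each simplex is carried onto another simplex unchanged, so normalization is preserved.
\item $\phi$ is invertible, with inverse built from the permutations $\sigma_s^{-1}$.
\item $\phi$ is linear and merely permutes coordinates in $\mathbb{R}^{nm(n-1)}$, so $|\det D\phi|=1$ and $\phi$ preserves Lebesgue measure.
\end{enumerate}
Choosing the coordinates as whole blocks of $n-1$ free entries per (state, action) pair makes the third point immediate, because $\phi$ permutes these blocks.

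The key identity is how $\phi$ acts on transition matrices. For any policy $\pi$, define $\pi'(s)=\sigma_s(\pi(s))$. Then
\be
[M_{\pi'}(\phi(x))]_{s's} = [\phi(x)]_{ss'}(\sigma_s(\pi(s))) = P_{ss'}(\pi(s)) = [M_\pi(x)]_{s's},
\ee
so $M_{\pi'}(\phi(x))=M_\pi(x)$. The map $\pi\mapsto\pi'$ is a bijection on the $m^n$ deterministic policies, because each $\sigma_s$ is a permutation, and by construction it sends $\pi_i$ to $\pi_j$. By hypothesis $V_\pi(x)=k(M_\pi(x))$, hence $V_{\pi'}(\phi(x))=V_\pi(x)$ for every $\pi$.

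Now suppose $x\in\mathcal{X}_i$, so $V_{\pi_i}(x)>V_\pi(x)$ for all $\pi\neq\pi_i$. Then $V_{\pi_j}(\phi(x))>V_{\pi'}(\phi(x))$ for all $\pi'\neq\pi_j$, because $\pi\mapsto\pi'$ is a bijection. Hence $\phi(\mathcal{X}_i)\subseteq\mathcal{X}_j$. Running the same argument with $\phi^{-1}$ gives the reverse inclusion, so $\phi(\mathcal{X}_i)=\mathcal{X}_j$ and
\be
\int_{\mathcal{X}_j}dx=\int_{\mathcal{X}_i}|\det D\phi|\,dx=\int_{\mathcal{X}_i}dx .
\ee

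I expect the main obstacle to be bookkeeping rather than substance. The index convention $[M_\pi(x)]_{ij}=P_{s_j s_i}(\pi(s_j))$ has to be kept straight, and the value function must depend on nothing other than $M_\pi$. The latter needs care for the time-averaged case, where dependence on an initial distribution must be policy-independent. Measurability of the sets $\mathcal{X}_i$ I will take for granted, or obtain from continuity of $V$ in $x$.
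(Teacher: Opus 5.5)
Your proposal is correct and is essentially the paper's proof. Your map $\phi$, with $\sigma_s$ the transposition of $\pi_i(s)$ and $\pi_j(s)$, is exactly the paper's involution $g_{ij}$, and your policy map $\pi\mapsto\pi'$ is the paper's $\phi_{ij}$; both arguments then conclude via $M_{\pi'}(\phi(x))=M_\pi(x)$ and $|\det|=1$ for a coordinate permutation. Your version is marginally tidier: you use intrinsic coordinates in $\mathbb{R}^{nm(n-1)}$ rather than $\mathbb{R}^{mn^2}$, where $\mathcal{X}$ has measure zero, and you prove the reverse inclusion $\phi(\mathcal{X}_i)\supseteq\mathcal{X}_j$ explicitly.
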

\begin{proof} 
    Here, we will sketch the proof and leave the details for the Appendix \ref{ap:equalvolumes}. Consider two different policies $\pi_i$ and $\pi_j$ and let us label the actions taken by them as $\pi_i(s_k) = b_k$ and $\pi_j(s_k) = c_k$. In an environment $x$, the only transition probabilities which contribute to the performance of $\pi_i$ are those for action $b_k$, taken in state $s_k$ for $k=1\dots n$. Using notation from before, these transition probabilities are $\{P_{s_k s'}(b_k):k=1\dots n, s'\in S\}$.\\ 
    
    Now, consider the following function $g_{ij}:\mathcal{X}\rightarrow \mathcal{X}$ which takes in an environment $x$ and changes it so that $P_{s_k s'}(b_k)$ and $P_{s_ks'}(c_k)$ are swapped for all $b_k\neq c_k$ and keeps all other transition probabilities the same. Applying policy $\pi_i$ in an environment $x$ results in the same transition matrix as applying the policy $\pi_j$ in an environment $g_{ij}(x)$ so we have that $M_{\pi_i}(x) = M_{\pi_j}(g_{ij}(x))$. If $\pi_i$ was optimal in the environment $x$, then $\pi_j$ will be optimal in the new environment $g_{ij}(x)$. Thus, if $\mathcal{X}_i$ is the set of environments for which $\pi_i$ is optimal, then $g_{ij}(\mathcal{X}_i)$ a set for which $\pi_j$ is optimal. Since (as proved in the appendix) $g_{ij}$ is invertible and measure-preserving, the volumes of $\mathcal{X}_i$ and $\mathcal{X}_j$ are equal. Since this argument can be applied to $\emph{any}$ pair of policies $\pi_i$ and $\pi_j$, we can conclude that the volumes of environments for which each policy is optimal are equal.
\end{proof}

In what follows, we will show that these Assumptions 1 and 2 hold for a very general class of goals. In particular, they hold when the goal is the maximisation of any non-constant reward function over environmental states. We consider the time-averaged, discounted, and finite-time total reward maximisation and show that all of these settings satisfy these two assumptions. Therefore, optimal policies for any goal captured by these kinds of reward maximisation contain $n \log m$ bits of information about their environment. In Appendix \ref{ap:generalprior} we prove an extension to these theorems which shows that these results (that optimal policies contain $n \log m$ bits of information about their environment) also apply when the prior is not uniform but is invariant under a re-labelling of actions.
\subsection{Proof Strategy and Notation} \label{sec:proofstrategy}
For all of these reward maximisation frameworks, our proof follows roughly the same steps. First, we ensure that the value function $V_{\pi}(x)$ is only influenced by the policy through its effect on the transition matrix, meaning that Theorem \ref{thm:equalvolumes} applies. Then, we show that the the value function is a real analytic function of $x$ and define a `difference function' $f_{ij}(x) = V_{\pi_i}(x)-V_{\pi_j}(x)$ which captures the difference in performance between policies $\pi_i$ and $\pi_j$ in an environment $x$. Having done this, we make use of the following standard lemma.
\begin{lemma} \label{lem:measurezero}
    For a real analytic function $f(x)$, either the set $\{x:f(x)=0\}$ has zero measure or the function is identically zero.
\end{lemma}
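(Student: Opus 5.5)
The plan is to prove the standard fact that the zero set of a real analytic function on a connected open domain is either the whole domain or Lebesgue-null. I will take $f$ real analytic on a connected open set $U\subseteq\mathbb{R}^d$. In the application, $U$ is the interior $\text{int}(\mathcal{X})$, which is convex and hence connected; the boundary is null, so it does not matter. The argument is an induction on the dimension $d$ combined with Fubini's theorem.

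The first step is the identity theorem. If $f$ vanishes on a set of positive measure, then by the Lebesgue density theorem that set has an accumulation point in $U$ in every direction. We would like to conclude that all partial derivatives of $f$ vanish there, but this is not immediate for $d>1$. So rather than argue through the identity theorem directly, I would run the induction as follows.

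\textbf{Base case $d=1$.} A nonzero analytic function on an interval has isolated zeros. These form a countable set and hence have measure zero.

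\textbf{Inductive step.} Suppose $f\not\equiv 0$ on the connected set $U$. Then $f$ is not identically zero on any nonempty open subset, by the identity theorem, since the set where all derivatives vanish is open and closed in $U$. Fix a small box $B=B'\times J\subset U$ with $B'\subset\mathbb{R}^{d-1}$ and $J$ an interval. For each $y\in B'$, the slice $t\mapsto f(y,t)$ is analytic on $J$. Let $E=\{y\in B' : f(y,\cdot)\equiv 0 \text{ on } J\}$. This set equals $\bigcap_k\{y: \partial_t^k f(y,t_0)=0\}$ for a fixed $t_0\in J$, so it is the common zero set of the real analytic functions $y\mapsto\partial_t^k f(y,t_0)$ on $B'$. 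If all of these vanished identically on $B'$, then $f\equiv0$ on $B$, which is a contradiction. So some $\partial_t^k f(\cdot,t_0)$ is a nonzero analytic function on $B'$, and by the induction hypothesis $E$ is null in $\mathbb{R}^{d-1}$. For $y\notin E$, the slice zero set in $J$ is countable, hence null. Fubini then gives that $\{f=0\}\cap B$ has measure zero. Covering $U$ by countably many such boxes finishes the proof.

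The main obstacle is bookkeeping rather than depth. First, the connectedness of the domain must be used correctly to pass from ``not identically zero on $U$'' to ``not identically zero on each box.'' Second, $E$ must be shown to be measurable and null, which the characterisation of $E$ as a common zero set of analytic functions handles. In the paper's setting, $\mathcal{X}$ is a product of simplices, so its interior is convex and this hypothesis holds. This lemma will then be applied to the difference functions $f_{ij}$ to verify Assumption~1.
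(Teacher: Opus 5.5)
Your proof is correct, and it is essentially the standard argument: isolated zeros in one variable, induction on dimension via the slice coefficients $\partial_t^k f(\cdot,t_0)$, and Fubini on countably many boxes. That is the argument in the Mityagin reference, which the paper cites instead of giving its own proof. You were also right to add the hypothesis that the domain is connected: the paper's statement omits it, it is genuinely needed, and it holds for the convex set $\text{int}(\mathcal{X})$, taken in the paper's $nm(n-1)$ free coordinates.
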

\begin{proof}
    A proof can be found in \cite{mityagin2015zero}.
\end{proof}

Applying this lemma to $f_{ij}(x)$ tells us that, for any pair of policies $\pi_i$ and $\pi_j$, the set of environments for which their performance is the same has measure zero. This means that, in all environments except a set of measure zero, there is a unique optimal policy for maximising the value function $V_{\pi}(x)$. This allows us to apply Lemma \ref{lem:nlogm} to the value function, yielding the result. A schematic diagram, showing the proof structure can be found in Figure \ref{fig:proof_strategy} in Appendix \ref{ap:proof_strategy}.\\

We will now introduce some terminology and notation used when discussing reward functions throughout the paper. Let $r:S\rightarrow (0,1)$ be a reward function over states and let $S_t$ be the random variable corresponding to the environment state at time $t$. The random variable $S_t$ is distributed depending on the environment $x$ along with the previous state $S_{t-1}$ and the action taken by the policy at the previous timestep $A_t=\pi(S_{t-1})$. For a fixed environment and a deterministic, Markovian policy of the kind we are considering, we can write $P(S_t=s_i)$ as
\be
    P(S_t=s_i) = \sum_{s_j\in S}P(S_t=s_i|X=x, S_{t-1}=s_j, A_t=\pi(s_j))P(S_{t-1}=s_j) \, .
\ee 
Notice that $P(S_t=s_i|X=x, S_{t-1}=s_j, A_t=\pi(s_j))$ is simply equal to the transition matrix element $[M_{\pi}(x)]_{ji}$. Now let us introduce the notation $\vec{v}_t$ to write the probability distribution over states as a vector. The elements of $\vec{v}_t$ will correspond to the probabilities $P(S_t=s_i)$. As is standard in the study of Markov processes, this allows us to write the evolution of the probability distribution over states as the vector equation $\vec{v}_{t+1} = M_{\pi}(x)\vec{v}_{t}$. We will use this notation to express value functions in terms of the transition matrix, allowing us to apply Theorem \ref{thm:equalvolumes}.\\

As a final piece of notation, it will be convenient to introduce an `expected reward' function, which is simply the expected value of reward function $r$ when applied to a distribution over states represented by a vector $\vec{v}$. We will denote this function $R:\mathbb{R}^n \rightarrow (0,1)$ so we have:
\be 
    R(\vec{v}) := \sum_i r(s_i)v_{i} =\sum_i r(s_i)P(s_i) \,,
\ee 
where $v_i$ is the element of $\vec{v}$ corresponding to $P(s_i)$. We will now apply this framework to various forms of reward maximisation.
\subsection{Reward Maximisation with discount rate} \label{sec:discount}
First, we consider a value function which is the time-discounted expected reward for a reward function over states. These value functions take the form:
\be \label{eq:discountvalue}
    V_\pi(x) = \sum_{t=1}^\infty \gamma^t \mathbb{E}[r(S_t)] \, ,
\ee
where $0<\gamma <1$ is a discount rate. Using the notation introduced in Section \ref{sec:proofstrategy}, we can express this in terms of the transition matrix and an initial distribution over states represented by a vector $\vec{v}_0$:
\be \label{eq:discountreward}
    V_{\pi}(x) = \sum_{t=1}^{\infty} \gamma^t R(M_\pi(x)^t \vec{v}_0) \,.
\ee
Writing the value function in this way shows that it is only influenced by the policy through its effect on the transition matrix.  We will now show that $V_\pi(x)$ is a real analytic function of $x$. The argument we will present here applies for any starting distribution over states $\vec{v}_0$. \\
\begin{lemma} \label{lem:discountanalytic}
    $V_\pi(x)$ is a real analytic function over $\mathcal{X}$.
\end{lemma}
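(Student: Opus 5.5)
Since $R$ is linear, I would write $R(\vec v)=\vec r^{\,T}\vec v$, where $\vec r=(r(s_1),\dots,r(s_n))^T$. This gives
\be
V_\pi(x)=\sum_{t=1}^\infty \gamma^t\,\vec r^{\,T} M_\pi(x)^t\vec v_0 .
\ee
The plan is to show that this series converges to a closed-form expression that is manifestly analytic in $x$. I would take the closed form to be
\be
V_\pi(x)=\vec r^{\,T}\bigl(\gamma M_\pi(x)\bigr)\bigl(I-\gamma M_\pi(x)\bigr)^{-1}\vec v_0 .
\ee
First, note that each entry $[M_\pi(x)]_{ij}=P_{s_js_i}(\pi(s_j))$ is a coordinate of $x$. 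So $M_\pi(x)$ is an affine (indeed linear) function of $x$, and in particular it is analytic.

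Second, I would show the Neumann series converges. For $x\in\mathcal{X}$, $M_\pi(x)$ is column-stochastic: its entries are nonnegative and its columns sum to $1$. Hence $\|M_\pi(x)\|_1=1$ in the induced $1$-norm, and $\|\gamma M_\pi(x)\|_1=\gamma<1$. This gives $\sum_t\|\gamma M_\pi\|^t<\infty$ and the identity $\sum_{t\ge1}(\gamma M)^t=\gamma M(I-\gamma M)^{-1}$. It also shows that $I-\gamma M_\pi(x)$ is invertible on all of $\mathcal{X}$.

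Third, I would establish analyticity. By Cramer's rule, $(I-\gamma M)^{-1}=\operatorname{adj}(I-\gamma M)/\det(I-\gamma M)$. Every entry of the adjugate is a polynomial in the entries of $M$, and so is the determinant. The determinant is nonzero on $\mathcal{X}$ by the previous step. Therefore $V_\pi(x)$ is a rational function of $x$ whose denominator does not vanish on $\mathcal{X}$, and such a function is real analytic wherever the denominator is nonzero.

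The main subtlety is that $\mathcal{X}$ is a closed set, which is not a full-dimensional open set in the ambient coordinates because of the normalization constraints. To handle this, I would first parametrize $\mathcal{X}$ by free coordinates: for each state--action pair, drop one transition probability $P_{s_is_n}(a_k)=1-\sum_{j<n}P_{s_is_j}(a_k)$. In these coordinates, $M_\pi$ is still affine in $x$.

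I would then extend the rational expression to an open neighbourhood of $\mathcal{X}$. For this, note that $\det(I-\gamma M)\neq 0$ is an open condition, since the determinant is continuous. Hence $V_\pi$ is analytic on an open set containing $\mathcal{X}$, and in particular on its interior, which is all that Lemma~\ref{lem:measurezero} requires up to measure zero.

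Alternatively, the series itself converges on the set where $\|M\|_1<1/\gamma$. This set contains a neighbourhood of $\mathcal{X}$ and gives an equivalent route to the same conclusion.
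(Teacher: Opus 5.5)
Your proposal is correct and follows essentially the same route as the paper: sum the matrix geometric series to get $\sum_{t\ge 1}(\gamma M_\pi(x))^t=\gamma M_\pi(x)(I-\gamma M_\pi(x))^{-1}$, then write the inverse via cofactors over $\det(I-\gamma M_\pi(x))\neq 0$, so $V_\pi$ is a rational function of $x$ with non-vanishing denominator. Your differences are minor refinements of what the paper leaves implicit: you justify convergence with the induced $1$-norm bound rather than the eigenvalue bound, and you explicitly handle the domain, using free coordinates and extending to an open neighbourhood of $\mathcal{X}$.
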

\begin{proof}
First, note that, due to the linearity of the expected value, we can write:
\be 
    V_{\pi}(x) = R\Big(\sum_{t=1}^{\infty} \gamma^t M_\pi(x)^t \vec{v}_0 \Big) \,.
\ee
Since $M_\pi(x)$ is stochastic all of its eigenvalues have magnitude $|\lambda|\leq 1$. Since $\gamma<1$, we have that the matrix $A=\gamma M_{\pi}(x)$ has all eigenvalues with $|\lambda|<1$. As a result we can apply the matrix geometric series:
\be
    \sum_{t=1}^{\infty} A^t = A\sum_{t=0}^{\infty} A^t =A(1 - A)^{-1} =A \frac{C^{\intercal}}{\det(1-A)} \, ,
\ee
where $C$ is the matrix of cofactors of $(1-A)$ and $\det(1-A)$ is the determinant. Due to the restrictions of the eigenvalues of $A$, we have that $\det{(1-A)} \neq0$ so the equation is well-defined. Since both $C$ and $\det(1-A)$ are polynomials in the elements of $A$, each element of $\sum_{t=1}^{\infty} A^t$ will be a real-analytic function of the elements of $A$. Furthermore, each element of $A$ is simply an element of $M_\pi(x)$ multiplied by $\gamma$. In turn, each element of $M_\pi(x)$ comes directly from $x$. Hence, we have that the elements of $\sum_{t=1}^{\infty} A^t$ are real analytic functions of $x$. Applying this matrix to $\vec{v}_0$ and taking the expected reward with $R$ involves summing these elements and multiplying them by real numbers (probabilities and rewards) which preserves analyticity. Therefore $V_\pi(x)$ is a real analytic function of $x$.\\
\end{proof}
Now, we will define a `difference function' $f_{ij}(x) = V_{\pi_i}(x) - V_{\pi_j}(x)$. Since $V_{\pi}(x)$ is a real analytic function of $x$, we have that $f_{ij}(x)$ is also a real analytic function of $x$. In order to apply Lemma \ref{lem:measurezero}, we will now show that $f_{ij}(x)$ is not identically zero.
\begin{lemma} \label{lem:discountnotzero}
    $f_{ij}(x)$ is not identically zero
\end{lemma}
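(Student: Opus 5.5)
To prove Lemma~\ref{lem:discountnotzero} I will construct one explicit environment $x^\ast\in\mathcal{X}$ with $f_{ij}(x^\ast)\neq 0$. Since $\pi_i\neq\pi_j$, there is a state $s_k$ with $b_k:=\pi_i(s_k)\neq c_k:=\pi_j(s_k)$. Since $r$ is non-constant, I fix $s_{+}\in\arg\max_s r(s)$ and $s_{-}\in\arg\min_s r(s)$, so $r(s_+)>r(s_-)$. The plan is to build a deterministic environment in which $\pi_i$ collects the highest reward after visiting $s_k$ and $\pi_j$ the lowest. I also want both policies to agree everywhere else.

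\emph{Construction of $x^\ast$.} For every state $s\neq s_k$ and every action, transition to $s_k$ with probability $1$. In $s_k$, action $b_k$ goes to $s_+$ with probability $1$ and action $c_k$ goes to $s_-$ with probability $1$. The remaining actions in $s_k$ are arbitrary. The two policies differ only at $s_k$, and actions taken outside $s_k$ are irrelevant here. So under $\pi_i$ every trajectory moves only through $s_k$ and $s_+$, and under $\pi_j$ only through $s_k$ and $s_-$. From any state, the walker is at $s_k$ within one step.

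\emph{Comparison.} I will show that $V_{\pi_i}(x^\ast)>V_{\pi_j}(x^\ast)$ for every starting state, and hence for every $\vec v_0$. The main obstacle is that the two chains may desynchronise when $s_+$ or $s_-$ coincides with $s_k$ (for example, if $s_+=s_k$ then $\pi_i$ sits at $s_k$ forever). I therefore argue by cases, comparing the cases $s_k=s_+$, $s_k=s_-$ and $s_k\notin\{s_+,s_-\}$.

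In the generic case the two trajectories are synchronised. They have the same state at even offsets after first reaching $s_k$, and at the intermediate times one is at $s_+$ while the other is at $s_-$. This gives a strictly positive termwise reward difference, with strictness already at the first visit to $s_+$ versus $s_-$.

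In the degenerate cases I will instead compare the reward at each time directly against the extremes. For instance, if $s_+=s_k$, then $\pi_i$ earns $r(s_+)$ at every time from its first visit to $s_k$ onward. Meanwhile $\pi_j$ earns at most $r(s_+)$ at each time, and earns $r(s_-)<r(s_+)$ infinitely often. Both policies reach $s_k$ at the same time (time $0$ or $1$), so the pre-$s_k$ contributions coincide. Since $\gamma^t>0$, strictness at even one time gives $f_{ij}(x^\ast)>0$. The case $s_-=s_k$ is symmetric: $\pi_j$ earns $r(s_-)$ at every time from that point on, while $\pi_i$ earns at least $r(s_-)$ and strictly more at alternate times.

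\emph{Boundary issue.} The point $x^\ast$ lies on $\partial\mathcal{X}$. The argument of Lemma~\ref{lem:discountanalytic} still shows that $V_\pi$ is analytic on a neighbourhood of all of $\mathcal{X}$, because $\det(1-\gamma M)\neq 0$ for every stochastic $M$. So $f_{ij}$ is continuous at $x^\ast$, and $f_{ij}(x^\ast)>0$ forces $f_{ij}>0$ on an open set of interior points. Hence $f_{ij}$ is not identically zero, which is what Lemma~\ref{lem:measurezero} requires.
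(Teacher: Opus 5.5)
Your proof is correct and follows the same route as the paper: you exhibit a boundary environment, built from a state where $\pi_i$ and $\pi_j$ differ and a maximal-reward state, in which $V_{\pi_i}>V_{\pi_j}$, and then use continuity of $f_{ij}$ at that boundary point to obtain an interior point with $f_{ij}\neq 0$. The only difference is the witness: the paper sends every other transition, and $\pi_j$'s action at $s_a$, to the uniform distribution, which gives a timestep-by-timestep comparison with no case split, whereas your deterministic funnel needs the case analysis on whether $s_k\in\{s_+,s_-\}$, which you handle correctly.
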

\begin{proof}
The two policies $\pi_i$ and $\pi_j$ must take a different action in at least one state which we will call $s_a$. Let us call the state with the highest reward $s_b$ which may or may not be the same as $s_a$. Consider the following environment $x'$. In this environment, in all states except $s_a$, all actions result in the next state being a random state from a uniform distribution over states. The action taken by $\pi_i$ in $s_a$ causes the state to transition to $s_b$ with probability 1, but the action taken by $\pi_j$ in state $s_a$ leads to a uniform distribution over all states. The transition matrix $M_{\pi_j}(x)$ will therefore have all elements equal to $\frac{1}{n}$. The transition matrix $M_{\pi_i}(x)$ will have all elements equal to $\frac{1}{n}$ \emph{except for} the $a^{th}$ column. The $a^{th}$ column of $M_{\pi_i}(x)$ will be all zeroes except for the $b^{th}$ entry which will be one. \\

Notice that, for any starting probability vector $\vec{v}_0$, the distribution over states for all subsequent timesteps will be uniform when applying $\pi_j$. However, when applying $\pi_i$ to distribution $\vec{v}$, the probability that that system will transition to state $s_b$ is equal to $p(s_b) = v_a+ \sum_{i\neq a} \frac{1}{n}v_i >\frac{1}{n}$. The probability will be uniform for all other states when applying $\pi_i$. As a result, the expected reward for policy $\pi_i$ at the first timestep will be larger than the expected reward achieved by $\pi_j$. Furthermore, this will be true at each subsequent timestep. Thus, when summing discounted rewards, we can conclude that $V_{\pi_i}(x')>V_{\pi_j}(x')$ and hence $f_{ij}(x')>0$. \\
    
Our example of $x'$ above required that one of the transitions was deterministic, and hence required that $x'$ was on the boundary of $\mathcal{X}$, but we can use the continuity of $f_{ij}$ to show that there exists $x\in \text{int}(\mathcal{X})$ which also satisfies $f_{ij}(x)\neq0$. Since $f_{ij}$ is analytic it is also continuous. Hence, since $f_{ij}(x')>0$ and $f_{ij}$ is continuous on $\mathcal{X}$, it's true that there is a $\delta>0$ such that for any $x \in \mathcal{X}$, if $d(x,x')<\delta$, then $f_{ij}(x)>0$. Now, consider $B_{\delta}(x')$ to be the ball of radius $\delta$ and centred in $x'$. Since $x' \in \partial \mathcal{X}$, $B_{\delta}(x') \cap \text{int}(\mathcal{X}) \neq \emptyset$. Then, let $x \in B_{\delta}(x') \cap \text{int}(\mathcal{X})$, it is true that $x \in \text{int}(\mathcal{X})$ and $f_{ij}(x)>0$, and hence $f_{ij}(x) \neq 0$.
\end{proof}
This can be applied to prove the following.
\begin{lemma} \label{lem:discountunique}
    The set of environments with more than one optimal policy for maximising the time-discounted reward have measure zero.
\end{lemma}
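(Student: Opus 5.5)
The plan is to reduce the claim to a finite union of zero sets of the difference functions $f_{ij}(x) = V_{\pi_i}(x) - V_{\pi_j}(x)$, and then combine Lemma \ref{lem:discountanalytic}, Lemma \ref{lem:discountnotzero} and Lemma \ref{lem:measurezero}. The key observation is that if an environment $x$ has more than one optimal policy, then there are two distinct indices $i \neq j$ with $V_{\pi_i}(x) = V_{\pi_j}(x) = \max_k V_{\pi_k}(x)$. In particular $f_{ij}(x) = 0$. So the set $\mathcal{Z}$ of environments with more than one optimal policy satisfies
\be
    \mathcal{Z} \subseteq \bigcup_{i \neq j,\; i,j \in \mathcal{I}} \{x \in \mathcal{X} : f_{ij}(x) = 0\} \, .
\ee

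Next, I show each set on the right has measure zero. By Lemma \ref{lem:discountanalytic}, each $V_{\pi}$ is real analytic on $\mathcal{X}$, so $f_{ij}$ is real analytic, being a difference of real analytic functions. By Lemma \ref{lem:discountnotzero}, $f_{ij}$ is not identically zero; indeed it is nonzero at some interior point. Lemma \ref{lem:measurezero} then gives that $\{x : f_{ij}(x) = 0\}$ has Lebesgue measure zero.

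The union has at most $(m^n)^2$ terms, a finite number. A finite union of measure-zero sets has measure zero, and a subset of a measure-zero set is measurable with measure zero. Hence $\mathcal{Z}$ has measure zero.

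The only delicate step is justifying the application of Lemma \ref{lem:measurezero}. That lemma is usually stated for analytic functions on a connected open domain, whereas $\mathcal{X}$ is a closed product of simplices living in an affine subspace of $\mathbb{R}^{nm(n-1)}$. I handle this in two moves.

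First, I parametrize $\mathcal{X}$ by dropping one coordinate per simplex, since each simplex is determined by its other entries. This gives a convex, hence connected, full-dimensional set. Its interior $\text{int}(\mathcal{X})$ is a connected open set, and $f_{ij}$ remains analytic there, since the geometric-series formula stays valid and analytic in a neighbourhood.

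Second, since $\partial \mathcal{X}$ has measure zero, it suffices to work on $\text{int}(\mathcal{X})$. Lemma \ref{lem:discountnotzero} supplies a point of $\text{int}(\mathcal{X})$ where $f_{ij} \neq 0$, which is exactly the non-identically-zero hypothesis needed on this connected open domain. This completes the argument.
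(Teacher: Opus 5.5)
Your proposal is correct and follows the same route as the paper: analyticity of $f_{ij}$ from Lemma~\ref{lem:discountanalytic}, not identically zero from Lemma~\ref{lem:discountnotzero}, then Lemma~\ref{lem:measurezero}. Your version is more careful than the paper's one-line proof. You make explicit the finite union over pairs $i \neq j$, and the restriction to the connected open domain $\text{int}(\mathcal{X})$ in reduced coordinates, where Lemma~\ref{lem:measurezero} actually applies. The paper leaves both points implicit.
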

\begin{proof}
    By Lemma \ref{lem:measurezero}, $f_{ij}$ must either be identically zero or the set of its zeros must be measure zero. Since, by Lemma \ref{lem:discountnotzero} it is not identically zero, the set of points for which it equals zero (ie. the set of environments for which there are two policies with identical performance) must be measure zero.
\end{proof}

Additionally, since value functions of the form given in equation (\ref{eq:discountreward}) are functions of the transition matrix $M_\pi(x)$, we can apply Theorem \ref{thm:equalvolumes} meaning that each policy is optimal in an equal volume of environments. This allows us to prove the following theorem.
\begin{theorem} \label{thm:two}
    An optimal deterministic policy for maximising the discount reward over states for any non-constant reward function and any discount rate $0<\gamma<1$ contains $n\log m$ bits of information about its environment.
\end{theorem}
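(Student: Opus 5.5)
The plan is to verify the two hypotheses of Lemma \ref{lem:nlogm} for the discounted value function (\ref{eq:discountreward}) and then read off the conclusion $I(X;\Pi)=n\log m$.

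\emph{Assumption 1 (a unique optimal policy almost everywhere).} I will use Lemma \ref{lem:discountunique}. For each of the finitely many pairs $i\neq j$ in $\mathcal{I}$, let $Z_{ij}=\{x:f_{ij}(x)=0\}$; by Lemma \ref{lem:discountunique} each $Z_{ij}$ has measure zero. Set $\mathcal{Z}=\bigcup_{i<j}Z_{ij}$. This is a finite union of at most $\binom{m^n}{2}$ null sets, so it is null. If $x\in\mathcal{X}_i\cap\mathcal{X}_j$ with $i\neq j$, then $V_{\pi_i}(x)=V_{\pi_j}(x)$, so $x\in Z_{ij}\subseteq\mathcal{Z}$. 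Hence the pairwise intersections are null. Since the policy set is finite, every $x$ has at least one optimal policy, so $\bigcup_i\mathcal{X}_i=\mathcal{X}$. On $\mathcal{X}\setminus\mathcal{Z}$ the optimal policy is unique.

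\emph{Assumption 2 (equal volumes).} Equation (\ref{eq:discountreward}) shows that $V_\pi(x)=k(M_\pi(x))$ with $k(M)=\sum_{t\ge1}\gamma^tR(M^t\vec v_0)$. So Theorem \ref{thm:equalvolumes} applies: the sets $\tilde{\mathcal{X}}_i$ on which $\pi_i$ is \emph{uniquely} optimal all have equal volume. Lemma \ref{lem:nlogm} is stated in terms of the sets $\mathcal{X}_i$ on which $\pi_i$ is merely optimal, so I need to reconcile the two. We have $\tilde{\mathcal{X}}_i\subseteq\mathcal{X}_i\subseteq\tilde{\mathcal{X}}_i\cup\mathcal{Z}$, so $\mathcal{X}_i$ and $\tilde{\mathcal{X}}_i$ differ by a null set and have the same volume. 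Because the $\tilde{\mathcal{X}}_i$ are disjoint and cover $\mathcal{X}\setminus\mathcal{Z}$, each has volume $V/m^n$.

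\emph{Conclusion.} Both hypotheses of Lemma \ref{lem:nlogm} now hold. It gives $h(X)-h(X\mid\Pi=\pi_i)=n\log m$ for every $i$. Substituting into (\ref{eq:mutualinformation}) with $\sum_iP(\Pi=\pi_i)=1$ yields $I(X;\Pi)=n\log m$. I will also note two points. First, $\Pi$ is well-defined as a random variable only almost surely, which is enough because $\mathcal{Z}$ is null. Second, nothing here depends on $\gamma\in(0,1)$ or on $r$ beyond what Lemmas \ref{lem:discountanalytic} and \ref{lem:discountnotzero} already used; non-constancy of $r$ enters through Lemma \ref{lem:discountnotzero}.

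\emph{Main obstacle.} The only delicate step is the bookkeeping between ``optimal'' and ``uniquely optimal'' just described. The real content, analyticity and non-vanishing of $f_{ij}$, was already established.

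I would also double-check one point in Lemma \ref{lem:discountnotzero}. It needs $V_{\pi_i}(x')>V_{\pi_j}(x')$, which requires $r(s_b)>\frac1n\sum_k r(s_k)$ when $s_b$ is a maximal-reward state. That inequality holds exactly because $r$ is non-constant. If this needed repair, I would restate the witness environment so that the comparison uses this inequality explicitly.
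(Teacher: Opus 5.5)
Your proposal is correct and follows the paper's proof exactly. Lemma \ref{lem:discountunique} gives Assumption 1, Theorem \ref{thm:equalvolumes} gives Assumption 2, and Lemma \ref{lem:nlogm} yields $I(X;\Pi)=n\log m$; your finite union of the null sets $Z_{ij}$ and your reconciliation of the ``uniquely optimal'' sets with the ``optimal'' sets only make explicit what the paper's three-line proof leaves implicit.

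Your check on Lemma \ref{lem:discountnotzero} is also right. The one other tacit point there is that the paper's claimed strict gain at the first timestep needs $(\vec v_0)_a>0$. This does not break the lemma: under $\pi_i$ the gain at step $t$ is $(\vec v_{t-1})_a\bigl(r(s_b)-\tfrac1n\sum_k r(s_k)\bigr)\ge 0$, and it is strict for $t\ge 3$, so the discounted values still differ.
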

\begin{proof}
    By Lemma \ref{lem:discountunique} all environments, except those of a measure zero subset, have a unique optimal policy. By Theorem \ref{thm:equalvolumes}, each policy is uniquely optimal in a subset of the same volume. Therefore, by Lemma \ref{lem:nlogm}, the optimal policy for maximising the time-discounted reward contains $n \log m$ bits of information about its environment.
\end{proof}

\subsection{Reward Maximisation over Finite time horizons} \label{sec:finite}
In the previous section, we considered summing reward in the limit of infinite timesteps. Here, we show that our argument equally applies to finite-time reward maximisation. When reward is summed over finite timesteps, we can also consider cases with no discount rate (ie, $\gamma=1$) since the accumulated reward will be finite. \\

We will consider value functions of the form:
\be \label{eq:finitevalue}
    V_\pi(x) =\sum_{t=1}^T \gamma^t \mathbb{E}[r(S_t)] \, ,
\ee
for any $0<\gamma\leq1$ where $\gamma=1$ corresponds to total reward summation with no discount rate. As before, we will re-write this in terms of the transition matrix:
\be 
    V_\pi(x) =\sum_{t=1}^T \gamma^t R(M_\pi(x)^t \vec{v}_0) \, .
\ee

Since this will be a degree $T$ polynomial in the elements of $M_\pi(x)$, we have that every value function of this form will be a real analytic function of $x$. This implies that functions of the form $f_{ij}(x) = V_{\pi_i}(x) - V_{\pi_j}(x)$ are also real analytic (note that since we have re-defined $V_\pi(x)$, we have similarly implicitly re-defined $f_{ij}(x)$ to be in terms of these finite-time value functions). To apply Lemma \ref{lem:measurezero} in order to show that there is a unique optimal policy in almost all environments we must therefore show that $f_{ij}$ is not identically zero. In order to do this, we need to make a minimal qualification that either $T>1$ or $\vec{v}_0$ has full support.
\begin{lemma} \label{lem:notzerofinite}
    $f_{ij}$ is not identically zero for value functions with $T>1$ or value functions where $\vec{v}_0$ has full support.
\end{lemma}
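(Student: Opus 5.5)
The plan is to reuse the environment $x'$ constructed in the proof of Lemma \ref{lem:discountnotzero} and to track the expected reward step by step over the finite horizon. As there, let $s_a$ be a state where $\pi_i(s_a)\neq\pi_j(s_a)$ and let $s_b$ be a state of maximal reward. In $x'$, every action in every state other than $s_a$ leads to the uniform distribution $\vec{u}$ over states. The action $\pi_j(s_a)$ also leads to $\vec{u}$, while the action $\pi_i(s_a)$ leads to $s_b$ with probability one. Write $\bar r=\frac1n\sum_k r(s_k)$ for the average reward. Since $r$ is non-constant, its maximum strictly exceeds its mean, so $r(s_b)>\bar r$.

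The key computation is the one-step update under each policy. Under $\pi_j$, any distribution $\vec v$ is mapped to $\vec u$. Under $\pi_i$, $\vec v$ is mapped to $(1-v_a)\vec u+v_a\,\vec{e}_b$, where $\vec e_b$ is the point mass on $s_b$. Hence for every $t\geq 1$ the expected reward under $\pi_j$ is exactly $R(\vec u)=\bar r$. Under $\pi_i$ it is
\[
(1-v_{t-1,a})\,\bar r+v_{t-1,a}\,r(s_b)\;\geq\;\bar r ,
\]
with strict inequality precisely when $v_{t-1,a}>0$. Each term of $f_{ij}(x')=\sum_{t=1}^T\gamma^t\big(R(\vec v^{(i)}_t)-\bar r\big)$ is therefore nonnegative, since $\gamma^t>0$. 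So it suffices to exhibit one $t\leq T$ with $v^{(i)}_{t-1,a}>0$.

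I would split into cases; this is the only delicate point, since it is exactly where the hypothesis ``$T>1$ or full support'' enters.
\begin{itemize}
\item If $\vec v_0$ has full support, then $v_{0,a}>0$, so the $t=1$ term is already strictly positive.
\item If $T>1$ and $v_{0,a}>0$, the $t=1$ term is again strictly positive.
\item If $T>1$ and $v_{0,a}=0$, then $\vec v^{(i)}_1=\vec u$, which has $u_a=1/n>0$, so the $t=2$ term is strictly positive.
\end{itemize}
In every case $f_{ij}(x')>0$. This also shows why the qualification is needed: with $T=1$ and $v_{0,a}=0$, both policies see identical first-step dynamics.

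Finally, $x'$ lies on $\partial\mathcal X$. Since $f_{ij}$ is a polynomial in $x$ and hence continuous, the same ball argument as at the end of the proof of Lemma \ref{lem:discountnotzero} yields a point of $\text{int}(\mathcal X)$ where $f_{ij}>0$. Therefore $f_{ij}$ is not identically zero.
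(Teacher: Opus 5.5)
Your proof is correct and follows essentially the same route as the paper. Both reuse the environment $x'$ from Lemma \ref{lem:discountnotzero} and, when $T>1$, obtain a strict gain at the second timestep. Your case split is tighter than the paper's version: you compute $\vec v^{(i)}_1=\vec u$ inside $x'$ itself when $v_{0,a}=0$, and only then pass to $\text{int}(\mathcal X)$ by continuity. The paper instead appeals to full-support distributions in interior environments, even though its witness $x'$ lies on the boundary.
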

\begin{proof}
    We can follow the proof used in Lemma \ref{lem:discountnotzero} and construct an environment where $\pi_i$ (where it differs from $\pi_j$) steers the environment to high reward states and $\pi_j$  steers the environment to a uniform distribution. However, we need to be careful since there is one situation where this does not apply. \\
    
    Suppose $T=1$ (ie. the value function only cares about maximising reward over the first timestep). Then, let $\vec{v}_0$ be a distribution without full support. We could then find two policies which were identical for all states where $\vec{v}_0$ was nonzero and differed only in states assigned zero probability mass by $\vec{v}_0$. In this situation $f_{ij}$ would be identically zero, since both policies would only be assessed over one timestep for their performance in states for which they behaved identically. This situation can be prevented by either enforcing that $\vec{v}_0$ has full support (since then all policy actions contribute to the final reward) or ensuring that the reward is summed over all timesteps (ie. $T>1$). Since we can assume that our environment is in $\text{int}(\mathcal{X})$ without affecting the mutual information calculation, in the second timestep, we are guaranteed to have a distribution over states with full support, since all actions taken in the first timestep have some probability of leading the environment to any of the other states. Thus, in this situation, $f_{ij}$ will not be identically zero provided $T>1$ or that $\vec{v}_0$ has full support.
\end{proof}
We can now prove the following lemma.
\begin{lemma} \label{lem:finiteunique}
    For all value functions of the form in equation (\ref{eq:finitevalue}), with either $T>1$ or $\vec{v}_0$ having full support, there is a unique optimal policy in all environments, except for a measure zero subset.
\end{lemma}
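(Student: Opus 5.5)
The plan mirrors the proof of Lemma \ref{lem:discountunique}, with Lemma \ref{lem:notzerofinite} playing the role of Lemma \ref{lem:discountnotzero}. First I will record that for each policy $\pi_i$ the finite-horizon value $V_{\pi_i}(x)=\sum_{t=1}^T\gamma^t R(M_{\pi_i}(x)^t\vec{v}_0)$ is a polynomial of degree at most $T$ in the entries of $M_{\pi_i}(x)$. Those entries are coordinates of $x$, so $V_{\pi_i}$ is a polynomial in $x$ and hence real analytic on all of $\mathbb{R}^{nm(n-1)}$, in particular on a neighbourhood of $\mathcal{X}$. No inverse or convergence issue arises, unlike the discounted case. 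It follows that each difference function $f_{ij}=V_{\pi_i}-V_{\pi_j}$ is real analytic.

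Next, fix any pair $i\neq j$. Under the hypothesis that $T>1$ or $\vec{v}_0$ has full support, Lemma \ref{lem:notzerofinite} gives that $f_{ij}$ is not identically zero. Lemma \ref{lem:measurezero} then says the zero set $Z_{ij}=\{x: f_{ij}(x)=0\}$ has Lebesgue measure zero. The one technical point is that Lemma \ref{lem:measurezero} concerns analytic functions on a connected open domain. Since $f_{ij}$ is a polynomial, it extends to all of $\mathbb{R}^{nm(n-1)}$, which is connected, so the lemma applies there and restricting to $\mathcal{X}$ keeps measure zero.

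Finally, I will show the non-uniqueness set is covered by the $Z_{ij}$. If some environment $x$ has two distinct optimal policies $\pi_i,\pi_j$, then $V_{\pi_i}(x)=V_{\pi_j}(x)$, so $x\in Z_{ij}$. Hence
\[
\{x\in\mathcal{X}: \text{more than one optimal policy}\}\subseteq \bigcup_{i<j} Z_{ij},
\]
which is a finite union of at most $\binom{m^n}{2}$ null sets and is therefore null. Every environment has at least one optimal policy because the policy set is finite, so outside this null set the optimal policy is unique.

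The only step needing care is the nonvanishing of $f_{ij}$, and that is already supplied by Lemma \ref{lem:notzerofinite}. That lemma also explains why the hypothesis on $T$ or $\vec{v}_0$ is needed. If $T=1$ and $\vec{v}_0$ lacks full support, two policies differing only off the support of $\vec{v}_0$ give $f_{ij}\equiv 0$, and then uniqueness genuinely fails on all of $\mathcal{X}$. So the main obstacle is simply making sure the analyticity domain issue is handled, which the polynomial structure resolves.
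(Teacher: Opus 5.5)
Your proposal is correct and follows essentially the same route as the paper: the value functions are polynomial (hence real analytic), Lemma~\ref{lem:notzerofinite} shows $f_{ij}$ is not identically zero, Lemma~\ref{lem:measurezero} gives measure-zero zero sets, and a union over pairs of policies finishes the argument. The paper leaves two things implicit that you spell out: extending $f_{ij}$ as a polynomial to all of $\mathbb{R}^{nm(n-1)}$, which settles the domain issue (in those coordinates some entries of $M_\pi(x)$ are affine functions of $x$ rather than coordinates, which changes nothing), and the finite union of null sets.
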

\begin{proof}
    Value functions of the form given in equation (\ref{eq:finitevalue}) are real analytic functions of the environment $x$. By Lemma \ref{lem:measurezero} we have that real analytic functions of the form $f_{ij}(x) = V_{\pi_i}(x) - V_{\pi_j}(x)$ will either be identically zero or have a set of zeros which is measure zero. By Lemma \ref{lem:notzerofinite}, $f_{ij}$ is not identically zero provided that $T>1$ or $\vec{v}_0$ has full support. This means that the set of environments for which $f_{ij}(x)=0$ and two policies have the same performance is of measure zero. Therefore, there is a unique optimal policy in all environments except a measure zero subset.
\end{proof}

Now, note that since we have shown that this family of value functions depends on the policy only through $M_\pi(x)$, it is covered by Lemma \ref{thm:equalvolumes}. This means that we can conclude that all policies have an equal volume of environments for which they are optimal. This allows us to prove the following theorem.
\begin{theorem} \label{thm:three}
    An optimal deterministic policy for maximising the summed reward over states for a finite number of timesteps $T$ with any discount rate $0<\gamma\leq 1$ for any non-constant reward function contains $n\log m$ bits of information about its environment provided that $T>1$ or $\vec{v}_0$ has full support.
\end{theorem}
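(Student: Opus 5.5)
The plan mirrors the proof of Theorem \ref{thm:two}: verify the two hypotheses of Lemma \ref{lem:nlogm} for the finite-horizon value function of equation (\ref{eq:finitevalue}), then invoke that lemma.

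\textbf{Assumption 1 (uniqueness almost everywhere).} This is exactly the content of Lemma \ref{lem:finiteunique}, which applies under the stated hypothesis that $T>1$ or $\vec{v}_0$ has full support. For completeness I would recall its ingredients. First, $V_\pi(x)=\sum_{t=1}^T \gamma^t R(M_\pi(x)^t\vec{v}_0)$ is a polynomial of degree at most $T$ in the entries of $x$, so each difference function $f_{ij}$ is real analytic. Second, Lemma \ref{lem:notzerofinite} shows $f_{ij}\not\equiv 0$. Lemma \ref{lem:measurezero} then makes each zero set $\{f_{ij}=0\}$ null. Taking the finite union over all pairs $i\neq j$ (at most $m^{2n}$ of them) still gives a null set. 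Outside that set every pair of policies has distinct values, so the optimal policy is unique. Hence the sets $\mathcal{X}_i\cap\mathcal{X}_j$ with $i\neq j$ have measure zero.

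\textbf{Assumption 2 (equal volumes).} The value function depends on the policy only through $M_\pi(x)$: it has the form $k(M_\pi(x))$ with $k(M)=\sum_{t=1}^T\gamma^t R(M^t\vec{v}_0)$. So Theorem \ref{thm:equalvolumes} applies, and the sets where each $\pi_i$ is uniquely optimal all have equal volume. One subtlety is that Lemma \ref{lem:nlogm} defines $\mathcal{X}_i$ as the set where $\pi_i$ is optimal, not uniquely optimal. By Assumption 1 these two sets differ only by a null set, so their volumes agree.

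\textbf{Conclusion and main obstacle.} With both hypotheses in place, Lemma \ref{lem:nlogm} gives $h(X)-h(X\mid\Pi=\pi_i)=n\log m$ for every $i$, and hence $I(X;\Pi)=n\log m$. The only genuinely delicate step is the non-vanishing of $f_{ij}$ in the case $T=1$, and this is exactly why the hypothesis appears. I would re-check the construction behind Lemma \ref{lem:notzerofinite}. When $T>1$, I would not rely on an interior environment. Instead I would use the same boundary environment $x'$ as in Lemma \ref{lem:discountnotzero}: every step there is uniform except for the steering action, so $\vec{v}_1$ has full support. At $t=2$ the policy $\pi_i$ puts strictly more mass on the top-reward state $s_b$ than $\pi_j$ does. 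At $t=1$ it puts at least as much: strictly more if $v_{0,a}>0$, and equal otherwise. Hence the summed values differ strictly. Continuity of the polynomial $f_{ij}$ then transfers the nonzero value into $\text{int}(\mathcal{X})$, or equivalently shows $f_{ij}\not\equiv 0$ directly.
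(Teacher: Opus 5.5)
Your proposal is correct and follows the paper's proof of Theorem~\ref{thm:three} exactly: Lemma~\ref{lem:finiteunique} gives Assumption~1, Theorem~\ref{thm:equalvolumes} gives Assumption~2, and Lemma~\ref{lem:nlogm} concludes, with your optimal-versus-uniquely-optimal remark and boundary-point-plus-continuity check being tidier than the paper's wording. One small slip in that check: if $\vec{v}_0$ is concentrated on $s_a\neq s_b$, then $\pi_i$ gives $\vec{v}_1=\delta_{s_b}$, which lacks full support and gives no gain at $t=2$; the conclusion still holds because the $t$-th summand of $f_{ij}(x')$ is $\gamma^t$ times the $\pi_i$-mass on $s_a$ at time $t-1$ times $r(s_b)-\frac{1}{n}\sum_k r(s_k)>0$, hence nonnegative, and strictly positive at $t=1$ when $v_{0,a}>0$ and at $t=2$ when $v_{0,a}=0$.
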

\begin{proof}
    By Lemma \ref{lem:finiteunique} we have that there is a unique optimal policy for this family of value functions in all environments except for a measure zero subset. By Theorem \ref{thm:equalvolumes} we have that each of these policies will be optimal in an equal volume of environments. Therefore, we can apply Lemma \ref{lem:nlogm} to this family of value functions and conclude that the optimal policy contains $n \log m$ bits of information about its environment.
\end{proof}
\subsection{Time-averaged reward maximisation} \label{sec:average}
Having considered finite-time reward maximisation and time-discounted reward maximisation, we now consider time-averaged reward maximisation of a reward function. Value functions of this kind take the following form:

\be \label{eq:timeaverage}
    V_{\pi}(x) = \lim_{T\rightarrow \infty}\frac{1}{T}\sum_{t=0}^T \mathbb{E}[r(S_t)] \, .
\ee
Again, we can re-write this in terms of the transition matrix 
\be \label{eq:timeaveragevector}
    V_{\pi}(x) = \lim_{T\rightarrow \infty}\frac{1}{T}\sum_{t=0}^T R(M_{\pi}(x)^t \vec{v}_0) \, .
\ee
Notice again that the only way in which the policy affects the value function is through the transition matrix, meaning that Theorem \ref{thm:equalvolumes} applies. Again, the argument we will present applies to value functions defined in terms of \emph{any} starting distribution $\vec{v}_0$.\\

First we will use the following lemma.
\begin{lemma} \label{lem:averagerealanalytic}
    Any value function $V_{\pi}(x)$ of the form given in equation (\ref{eq:timeaverage}) is a real analytic function of $x$ over $\text{int}(\mathcal{X})$.
\end{lemma}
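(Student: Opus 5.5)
The plan is to show that, on $\text{int}(\mathcal{X})$, the Cesàro limit in equation (\ref{eq:timeaveragevector}) equals $R(\vec{u}_\pi(x))$. Here $\vec{u}_\pi(x)$ is the unique stationary distribution of $M_\pi(x)$, and we then express $\vec{u}_\pi(x)$ as a ratio of polynomials in the entries of $x$ with a nonvanishing denominator.

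\textbf{Step 1: positivity of the chain.} For $x\in\text{int}(\mathcal{X})$ every transition probability is strictly positive. Hence every entry of $M_\pi(x)$ is positive, and the Markov chain induced by any deterministic policy is irreducible and aperiodic.

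\textbf{Step 2: reduce the time average to the stationary distribution.} By Perron--Frobenius applied to the positive column-stochastic matrix $M_\pi(x)$:
\begin{itemize}
\item the eigenvalue $1$ is simple;
\item every other eigenvalue satisfies $|\lambda|<1$;
\item $M_\pi(x)^t\vec{v}_0\to\vec{u}_\pi(x)$ for every probability vector $\vec{v}_0$, where $\vec{u}_\pi(x)$ is the unique probability vector with $M_\pi(x)\vec{u}=\vec{u}$.
\end{itemize}
Since $R$ is linear and the Cesàro mean of a convergent sequence converges to the same limit, $V_\pi(x)=R(\vec{u}_\pi(x))$, independently of $\vec{v}_0$.

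\textbf{Step 3: closed form for $\vec{u}_\pi(x)$.} Use the Markov chain tree theorem or, equivalently, cofactors. Let $L=I-M_\pi(x)$, which has rank $n-1$ because the eigenvalue $1$ is simple. Then $\text{adj}(L)$ is nonzero and satisfies $L\,\text{adj}(L)=\det(L)I=0$, so each column of $\text{adj}(L)$ lies in $\ker L=\text{span}(\vec{u}_\pi)$. Concretely, the $i$-th principal minor $D_i$ of $L$, obtained by deleting row $i$ and column $i$, gives
\be
    [\vec{u}_\pi(x)]_i=\frac{D_i}{\sum_k D_k}\,.
\ee
Each $D_i$ is a polynomial in the entries of $M_\pi(x)$, hence in the coordinates of $x$.

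\textbf{Step 4: nonvanishing denominator.} By the Markov chain tree theorem, each $D_i$ is a sum over spanning trees rooted at $s_i$ of products of transition probabilities. For positive $M_\pi(x)$ every such product is positive, so $\sum_k D_k>0$ throughout $\text{int}(\mathcal{X})$. If one prefers not to cite the tree theorem, the same conclusion follows from the rank argument in Step 3: $\text{adj}(L)\neq 0$ and all its columns are multiples of the positive vector $\vec{u}_\pi$ with a common sign, so the diagonal sum is nonzero.

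\textbf{Step 5: conclusion.} A ratio of polynomials with nonvanishing denominator is real analytic on the open set $\text{int}(\mathcal{X})$. $R$ is a fixed linear combination with coefficients $r(s_i)$, so $V_\pi(x)$ is real analytic there.

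The main obstacle is Step 4, namely verifying that the denominator never vanishes. The cleanest route is the tree-sum positivity; I would try that first and fall back on the rank and adjugate argument if needed. The interior restriction is essential at this point: on the boundary, reducible chains can make $\vec{u}_\pi$ non-unique and the limit can depend on $\vec{v}_0$.
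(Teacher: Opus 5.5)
\emph{The proposal is correct.} Your argument has the same overall architecture as the paper's:
\begin{itemize}
\item interiority makes the chain positive;
\item convergence plus the Ces\`{a}ro mean theorem gives $V_\pi(x)=R(\vec{\mu}_\pi(x))$ (your $\vec{u}_\pi$);
\item the stationary vector is one fixed rational function of $x$ whose denominator never vanishes on $\text{int}(\mathcal{X})$.
\end{itemize}
You differ from the paper in how that rational formula is produced and certified. The paper replaces the last row of $M_\pi(x)-1$ by all ones to form $B$. It proves $B$ invertible by a kernel argument, using $\vec{1}^\intercal(M-1)=0$ and the one-dimensional null space, and then applies Cramer's rule to $B\vec{\mu}=(0,\dots,0,1)^\intercal$. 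You use the principal minors $D_i$ of $L=I-M_\pi(x)$ and the Markov chain tree theorem.

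The two formulas actually coincide on $\text{int}(\mathcal{X})$. Expand $\det B$ along its all-ones row and $\det B_i$ along the replaced column, and use $\text{adj}(L)=c\,\vec{u}_\pi\vec{1}^\intercal$ (justified below). This gives $\det B=(-1)^{n-1}\sum_k D_k$ and $\det B_i=(-1)^{n-1}D_i$. So the difference is only in how nonvanishing is certified. Your route makes the denominator manifestly positive, as a sum of products of transition probabilities, and gives $[\vec{u}_\pi]_i>0$ for free, at the price of citing the tree theorem. The paper's route is self-contained linear algebra but only shows $\det B\neq 0$.

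One step needs tightening if you use cofactors instead of the tree theorem. Knowing only that each column of $\text{adj}(L)$ lies in $\ker L$ gives column $j$ equal to $c_j\vec{u}_\pi$. Hence $D_i=c_i[\vec{u}_\pi]_i$ with a possibly $i$-dependent constant. That yields neither your Step 3 formula nor the ``common sign'' claimed in the Step 4 fallback. The fix is to use also $\text{adj}(L)\,L=\det(L)\,I=0$:
\begin{itemize}
\item each row of $\text{adj}(L)$ then lies in the left null space of $L$, which is spanned by $\vec{1}^\intercal$;
\item so $\text{adj}(L)=c\,\vec{u}_\pi\vec{1}^\intercal$, with $c\neq 0$ because $\operatorname{rank}L=n-1$;
\item therefore $D_i=c\,[\vec{u}_\pi]_i$ and $\sum_k D_k=c\neq 0$.
\end{itemize}
This closes both steps without the tree theorem.
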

\begin{proof}
    The proof is straightforward but lengthy and involves application of standard results about stationary Markov processes. It can be found in Appendix \ref{ap:averageanalytic}.
\end{proof}

This also means that $f_{ij}(x) = V_{\pi_i}(x) - V_{\pi_j}(x)$ is a real analytic function of $x$. We now repeat the same process as in previous sections, first ruling out that $f_{ij}$ is identically zero and then applying Lemma \ref{lem:measurezero}. 
\begin{lemma} \label{lem:notzero}
    $f_{ij}(x)$ is not identically zero.
\end{lemma}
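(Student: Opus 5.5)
Following the proof of Lemma \ref{lem:discountnotzero}, the plan is to exhibit a single environment in which $f_{ij}$ is nonzero. Then continuity (indeed analyticity, Lemma \ref{lem:averagerealanalytic}) gives nonzero values at nearby points. The difference here is that Lemma \ref{lem:averagerealanalytic} only gives analyticity on $\text{int}(\mathcal{X})$. So we should either build the witness environment directly in the interior, or build it on the boundary and check that $V_\pi$ extends continuously to it.

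We build the witness directly in the interior. Let $s_a$ be a state where $\pi_i(s_a)\neq\pi_j(s_a)$, and let $s_b$ be a state of maximal reward. Since $r$ is non-constant, $r(s_b)>\frac{1}{n}\sum_k r(s_k)$. Fix a small $\epsilon\in(0,1)$. Define $x_\epsilon$ as follows.
\begin{itemize}
\item In every state other than $s_a$, and for the action $\pi_j(s_a)$ in $s_a$, every action leads to the uniform distribution over states.
\item The action $\pi_i(s_a)$ in $s_a$ leads to $s_b$ with probability $1-\epsilon+\epsilon/n$ and to each other state with probability $\epsilon/n$.
\item All remaining actions in $s_a$ lead to the uniform distribution.
\end{itemize}
All entries are strictly positive, so $x_\epsilon\in\text{int}(\mathcal{X})$.

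Under $\pi_j$ the transition matrix is $M_{\pi_j}=\frac1n J$, where $J$ is the all-ones matrix. The state distribution is therefore uniform from $t=1$ onward, and the time average is $V_{\pi_j}=\frac1n\sum_k r(s_k)$, independent of $\vec v_0$.

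Under $\pi_i$ the chain is irreducible and aperiodic because all entries are positive. By the standard ergodic theorem (which the appendix on Lemma \ref{lem:averagerealanalytic} presumably uses), the Cesàro limit equals $R(\vec\mu)$, where $\vec\mu$ is the unique stationary distribution, again independent of $\vec v_0$. Next we compute $\vec\mu$ explicitly. Every column except column $a$ is uniform, so for $k\neq b$ we get $\mu_k=\frac1n(1-\mu_a)+\frac{\epsilon}{n}\mu_a$. Since $\mu_a>0$ and $\epsilon<1$, this is strictly less than $\frac1n$. The excess mass goes to $s_b$, so $\mu_b>\frac1n$, and every other coordinate equals a common value $c<\frac1n$.

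It follows that
\[
R(\vec\mu)-\tfrac1n\sum_k r(s_k)=\Big(\mu_b-\tfrac1n\Big)\Big(r(s_b)-\tfrac{1}{n-1}\sum_{k\neq b}r(s_k)\Big)>0 .
\]
This identity holds because the coordinates of $\vec\mu-\frac1n\vec 1$ are $\mu_b-\frac1n$ at $s_b$ and $c-\frac1n=-(\mu_b-\frac1n)/(n-1)$ elsewhere. Non-constancy of $r$ makes the second factor positive. Hence $f_{ij}(x_\epsilon)>0$, and $f_{ij}$ is not identically zero on $\text{int}(\mathcal{X})$.

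The main obstacle is the stationary-distribution step. We must justify that the Cesàro average converges to $R(\vec\mu)$ for every $\vec v_0$, and we must handle the edge case $a=b$ in the computation; there the formula for $\mu_k$, $k\neq b$, still holds. The first point is routine once irreducibility and aperiodicity are in place, which is exactly why we chose an interior environment rather than the deterministic boundary one used in Lemma \ref{lem:discountnotzero}.
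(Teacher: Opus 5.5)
Your proposal is correct and follows essentially the same route as the paper's proof in Appendix \ref{ap:notzero}. Both build an interior environment where every relevant transition is uniform except that $\pi_i$'s action at $s_a$ is an $\epsilon$-perturbation of a jump to the maximal-reward state $s_b$, and then compare the two stationary distributions. Your execution is actually tidier than the paper's: you avoid its explicit stationary masses $\frac{1}{n\epsilon+1}$ and $\frac{1}{(n\epsilon+1)(n-1)}$, which do not sum to one, and your identity $\bigl(\mu_b-\frac{1}{n}\bigr)\bigl(r(s_b)-\frac{1}{n-1}\sum_{k\neq b}r(s_k)\bigr)$ correctly handles ties at the maximal reward, which the paper's ``$s_b$ has a higher reward than all other states'' glosses over.
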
 
\begin{proof}
    Again, as in Lemmas \ref{lem:discountnotzero} and \ref{lem:notzerofinite} we can construct an environment where $\pi_i$ has higher average reward than $\pi_j$. The construction is slightly different in this case and is done explicitly in Appendix \ref{ap:notzero}.
\end{proof}

\begin{lemma} \label{lem:timeaverageunique}
    The set of environments with more than one optimal policy for maximizing the time-averaged reward have measure zero.
\end{lemma}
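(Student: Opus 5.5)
The argument will mirror Lemma \ref{lem:discountunique}, with the one extra care that real analyticity is only available on $\text{int}(\mathcal{X})$ rather than on all of $\mathcal{X}$.

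\emph{Step 1: reduce to the interior.} The set of environments with more than one optimal policy is contained in
\[
\partial\mathcal{X} \cup \bigcup_{i\neq j} \{x\in \text{int}(\mathcal{X}) : f_{ij}(x)=0\}.
\]
Indeed, if $\pi_i$ and $\pi_j$ are both optimal at $x$, then $V_{\pi_i}(x)=V_{\pi_j}(x)$. The boundary $\partial\mathcal{X}$ has Lebesgue measure zero, as noted in the setup. The union is over finitely many pairs, since there are $m^n$ policies. So it suffices to show that each zero set $Z_{ij}=\{x\in\text{int}(\mathcal{X}): f_{ij}(x)=0\}$ has measure zero.

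\emph{Step 2: apply Lemma \ref{lem:measurezero} on the interior.}
\begin{itemize}
\item By Lemma \ref{lem:averagerealanalytic}, $V_{\pi_i}$ and $V_{\pi_j}$ are real analytic on $\text{int}(\mathcal{X})$, so $f_{ij}$ is real analytic there.
\item The domain is an open subset of the affine space cut out by the normalization constraints. I parametrize it by dropping one coordinate per simplex, so it becomes an open subset of $\mathbb{R}^{nm(n-1)}$. This linear change of coordinates preserves analyticity and null sets.
\item This open set is connected, since it is a product of open simplices, each of which is convex. Connectedness is needed for the dichotomy of Lemma \ref{lem:measurezero}: either $f_{ij}\equiv 0$ on the interior, or $Z_{ij}$ has measure zero.
\end{itemize}

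\emph{Step 3: exclude $f_{ij}\equiv 0$.} Lemma \ref{lem:notzero} says $f_{ij}$ is not identically zero. The main subtlety is that its witness environment might lie on the boundary, for example if it uses deterministic transitions to make the optimal policy's advantage clear. In that case I would use the same move as in Lemma \ref{lem:discountnotzero}, but continuity of $V_\pi$ at boundary points is not automatic here: time-averaged reward can jump where the chain becomes reducible.

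So I would first try to arrange that the witness from Appendix \ref{ap:notzero} already lies in $\text{int}(\mathcal{X})$. One way is to mix it with a uniform environment:
\begin{itemize}
\item In state $s_a$, the action chosen by $\pi_i$ goes to the highest-reward state $s_b$ with probability $1-\epsilon$ and spreads the remaining mass uniformly.
\item All other transitions are uniform.
\end{itemize}
This chain is irreducible and aperiodic. Its stationary distribution puts strictly more than $1/n$ mass on $s_b$ and equal mass on all other states, whereas under $\pi_j$ the stationary distribution is uniform. Since $r$ is non-constant, $r(s_b)$ exceeds the mean reward, so $f_{ij}>0$ at this interior point. This gives non-vanishing directly on the interior and avoids any continuity issue.

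Combining Steps 1–3, the set of environments with more than one optimal policy is a finite union of null sets together with $\partial\mathcal{X}$, and so has measure zero.
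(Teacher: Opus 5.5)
Your proof is correct and follows the same route as the paper: real analyticity of $f_{ij}$ (Lemma \ref{lem:averagerealanalytic}), then the analytic zero-set dichotomy (Lemma \ref{lem:measurezero}), then a non-vanishing witness (Lemma \ref{lem:notzero}); your interior witness is exactly the paper's construction in Appendix \ref{ap:notzero}, valid once $\epsilon<(n-1)/n$ so that $s_b$ gets more stationary mass than each other state. Your extra care fills in details the paper leaves implicit: working on the relative interior, noting the connectedness the dichotomy needs, and placing the witness in the interior because time-averaged value can be discontinuous at the boundary.
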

\begin{proof}
    Since by Lemma \ref{lem:averagerealanalytic}, we have that $V_\pi(x)$ is real analytic for all policies $\pi$, we also have $f_{ij}(x) = V_{\pi_i}(x) - V_{\pi_j}(x)$ is real analytic. By Lemma \ref{lem:measurezero}, the set $\{x: f_{ij}(x)=0\}$ must have measure zero, or $f_{ij}(x)$ is identically zero. By Lemma \ref{lem:notzero}, we know that $f_{ij}(x)$ is not identically zero, therefore $\{x: f_{ij}(x)=0\}$ has measure zero. Therefore, there will be a unique policy in all environments except a measure zero subset.
\end{proof}

This justifies our requirement that there is exactly one optimal policy in all environments except a measure zero subset of $\mathcal{X}$. Furthermore, equation (\ref{eq:timeaveragevector}) shows that our value function can be expressed as a function of the transition matrix, meaning that we can apply Theorem \ref{thm:equalvolumes} in order to prove our final theorem.

\begin{theorem} \label{thm:four}
    An optimal deterministic policy for maximising the time-averaged reward over states for any non-constant reward function contains $n\log m$ bits of information about its environment.
\end{theorem}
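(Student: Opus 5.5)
The plan mirrors the proofs of Theorems \ref{thm:two} and \ref{thm:three}, applied to the time-averaged value function of equation (\ref{eq:timeaverage}). Throughout we work on $\text{int}(\mathcal{X})$ rather than $\mathcal{X}$. Since $\partial\mathcal{X}$ has Lebesgue measure zero, discarding it changes neither $h(X)$, nor any $h(X\mid\Pi=\pi_i)$, nor the volumes $\int_{\mathcal{X}_i}dx$. Consequently every volume and entropy computation in Lemma \ref{lem:nlogm} and Theorem \ref{thm:equalvolumes} can be carried out on the interior without loss.

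The first step is to verify Assumption 1, that almost every environment has a unique optimal policy. This is exactly Lemma \ref{lem:timeaverageunique}. That lemma in turn rests on two facts. The first is the real analyticity of $V_\pi$ on $\text{int}(\mathcal{X})$ (Lemma \ref{lem:averagerealanalytic}). The second is that each $f_{ij}$ is not identically zero (Lemma \ref{lem:notzero}). Lemma \ref{lem:measurezero} then shows that each zero set $\{x: f_{ij}(x)=0\}$ is null. Taking the finite union over the $\binom{m^n}{2}$ pairs $(i,j)$ gives a null set $\mathcal{Z}$. Outside $\mathcal{Z}$ all $m^n$ values $V_{\pi_i}(x)$ are distinct, so exactly one policy attains the maximum. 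Hence the sets $\mathcal{X}_i$ intersect pairwise only in null sets and cover $\mathcal{X}\setminus\mathcal{Z}$.

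The second step is to verify Assumption 2, that each policy is optimal on an equal volume. Equation (\ref{eq:timeaveragevector}) expresses $V_\pi(x)=k(M_\pi(x))$ with
\[
k(M)=\lim_{T\to\infty}\frac{1}{T}\sum_{t=0}^T R(M^t\vec v_0),
\]
so Theorem \ref{thm:equalvolumes} applies directly. One point needs care: the swap map $g_{ij}$ must send $\text{int}(\mathcal{X})$ to itself, so that the limit stays well defined on both sides. It does, because $g_{ij}$ only permutes transition probabilities, and so preserves strict positivity of every entry.

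Combining the two steps, Lemma \ref{lem:nlogm} gives $h(X)-h(X\mid\Pi=\pi_i)=n\log m$ for every $i$, and hence $I(X;\Pi)=n\log m$.

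The main obstacle is the analyticity input, Lemma \ref{lem:averagerealanalytic}, which I would establish as follows. On $\text{int}(\mathcal{X})$ every $M_\pi(x)$ has strictly positive entries, so the induced chain is irreducible and aperiodic. The Cesàro limit therefore equals $R(\vec\mu_\pi(x))$, where $\vec\mu_\pi(x)$ is the unique stationary distribution, independent of $\vec v_0$. By the Markov chain tree theorem, equivalently Cramer's rule applied to $(M-I)$ with one row replaced by the normalisation condition, the entries of $\vec\mu_\pi(x)$ are ratios of polynomials in $x$. The denominator is strictly positive on the interior, so $\vec\mu_\pi(x)$, and hence $V_\pi(x)$, is real analytic there.

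The non-vanishing input, Lemma \ref{lem:notzero}, can be handled by a near-boundary construction analogous to Lemma \ref{lem:discountnotzero}. Take $\pi_i$ to send $s_a$ to the highest-reward state $s_b$ with probability close to $1$, while all other transitions are close to uniform. Then $\pi_i$ has a stationary mass on $s_b$ exceeding $1/n$, and so a strictly larger average reward than $\pi_j$. Since the stationary distribution is continuous on the interior, this strict inequality persists at some interior point, so $f_{ij}$ is not identically zero.
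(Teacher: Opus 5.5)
Your proposal is correct and follows the paper's proof: Assumption 1 comes from Lemma \ref{lem:timeaverageunique}, which combines analyticity (Cramer's rule for the stationary distribution), non-vanishing (the near-deterministic construction of Appendix \ref{ap:notzero}) and Lemma \ref{lem:measurezero}; Assumption 2 comes from Theorem \ref{thm:equalvolumes}, and the conclusion from Lemma \ref{lem:nlogm}. Your additional checks, namely the finite union of null sets over all pairs $(i,j)$ and the observation that $g_{ij}$ maps $\text{int}(\mathcal{X})$ to itself because it only permutes entries, are valid refinements that the paper leaves implicit.
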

\begin{proof}
    By Lemma \ref{lem:timeaverageunique} all environments, except those of a measure zero subset, have a unique optimal policy. By Theorem \ref{thm:equalvolumes}, each policy is uniquely optimal in a subset of the same volume. Therefore, by Lemma \ref{lem:nlogm}, the optimal policy for maximising the time-averaged reward contains $n \log m$ bits of information about its environment.
\end{proof}

\section{Conclusion}
We have shown that if we observe an optimal agent, starting with zero belief about the world in which the agent is (represented by a uniform prior), the knowledge that the agent is implementing a specific deterministic policy provides exactly $n \log m$ bits of information. Mathematically, this means the mutual information between the environment and the optimal policy is $n \log m$, for a very broad class of rewards. More specifically, we've shown this is true for any reward that depends only on states, regardless of the regime being finite horizon,  infinite discounted horizon or time-averaged. \\

To show this result, we established some general facts about the space of environments. First, we proved the deterministic optimal policies for reward maximizers induce a finite partition over the environments, for almost every environment. Each set of the partition is defined by environments that are optimal for a deterministic policy, and there is one such set for every distinct deterministic policy. Since there are $m^n$ possible deterministic policies, the induced partition contains $m^n$ sets. We highlight the fact that this result is strictly geometric and does not depend on any specific prior over environments. Thus, knowing the optimal policy lets us conclude that the environment is in a certain set of the partition. We then used this fact to show that knowledge of the optimal policy gives $\log m^n$ bits of information about the environment when one uses it to update from state of maximum ignorance.\\

Importantly, these results hold for a wide class of reward accumulation methods and are not an artifact of a specific choice of reward function. More generally, as long as the value function depends only on the transition probabilities of the environment, the result still holds. \\

One of the limitations of the present study is that we considered the policy $\pi$ to be deterministic and without memory. Future work would involve extending this result to non-deterministic policies of the form $\pi : S \to \Delta A$, where $\Delta A$ is the set of probability distributions over actions. This would make the result stronger since often real world agents are non-deterministic. Additionally, we would like to consider agents with memory, whose policies take in histories as well as the current state. Another extension of this result could consider partially observable environments, i.e, a setup such that there is a (possibly stochastic) function $f : S \to O$ and policies $\pi : O \to A$. In this setup, for any state $s$, $f(s) \in O$ represents the `observation' received by the agent, which may not convey full information about the environment state. \\

In this paper, we considered only optimal policies, but an obvious extension is to investigate world information contained by non-optimal policies. If a policy was not optimal but achieved a performance within $\delta$ of the optimal performance, could we bound the mutual information between the policy and the environment in terms of $\delta$? Another natural extension is to consider a wider class of goals, such as those involving reward functions which depend on actions as well as states. Finally, we are also interested in proving a similar result using the notion of mutual information from Algorithmic Information Theory instead of Shannon information.  \\

\section{Acknowledgments}
This work was funded by the Advanced Research + Invention Agency (ARIA) Safeguarded AI Programme through project code MSAI-SE01-P005. Thanks to Santiago Cifuentes, Guillermo Martin, Winter Cross for reading and commenting on the draft of this manuscript. Figures \ref{fig:CMPs} and \ref{fig:Xspace} were designed by Melisa Morales from Science Graphic Design.

\section*{Appendix}
\appendix

\setcounter{equation}{0}
\section{Visual Diagram} \label{ap:proof_strategy}
\begin{figure} [!t]
\centering
\resizebox{\textwidth}{!}{%
\begin{tikzpicture}[
    >=stealth,
    box/.style = {rectangle, draw=black, thick, rounded corners=4pt, align=center, inner sep=8pt, font=\sffamily\small},
    goal/.style = {box, fill=green!10, text width=6cm},
    core/.style = {box, fill=blue!10, text width=10.5cm},
    theorem/.style = {box, fill=orange!10, text width=7.2cm},
    bridge/.style = {box, fill=purple!10, text width=7.2cm},
    frame/.style = {box, fill=yellow!10, text width=4.2cm}
]

\node[goal] (goal) at (0, 0) {
    \textbf{Main Theorems (\ref{thm:two}, \ref{thm:three}, \& \ref{thm:four})}\\
    $I(X;\Pi) = n \log m$
};

\node[core] (lemma1) at (0, -2.8) {
    \textbf{Lemma \ref{lem:nlogm} (Partition Bound)}\\
    (1) optimal policy is unique almost everywhere and\\
    (2) partition volumes are equal $\left(\int_{\mathcal{X}_i} dx = \int_{\mathcal{X}_j} dx\right)$\\
    $\implies I(X;\Pi) = n \log m$
};

\node[theorem] (lemma2) at (-4.8, -6.8) {
    \textbf{Lemma \ref{lem:measurezero} (Measure Zero Roots)}\\
    Difference function $f_{ij}$ is real analytic and $f_{ij} \not\equiv 0$\\
    $\implies$ its roots have measure zero\\
    (Yields Unique Optimality)
};

\node[theorem] (thm1) at (4.8, -6.8) {
    \textbf{Theorem \ref{thm:equalvolumes} (Equal Volumes)}\\
    Value function $V_\pi(x)$ depends only on $M_\pi(x)$\\
    $\implies$ transformation $g_{ij}$ preserves volume\\
    (Yields Equal Partition Volumes)
};

\node[bridge] (analytic) at (-4.8, -10.8) {
    \textbf{Analytic \& Non-Zero Proofs}\\
    We can prove $V_\pi$ is real analytic and $f_{ij} \not\equiv 0$\\
    \vspace{4pt}
    \scriptsize{\textit{Lemmas \ref{lem:discountanalytic} \& \ref{lem:discountnotzero} (Discounted)}}\\
    \scriptsize{\textit{Lemmas \ref{lem:notzerofinite} \& \ref{lem:finiteunique} (Finite)}}\\
    \scriptsize{\textit{Lemmas \ref{lem:averagerealanalytic} \& \ref{lem:notzero} (Averaged)}}
};

\node[bridge] (markov) at (4.8, -10.8) {
    \textbf{Policy only influences the value via transition matrix}\\
    We can express the value function strictly via\\
    the state distribution and transition matrix:\\
    $V_\pi(x) = f(M_\pi(x)^t \vec{v}_0)$
};

\draw[fill=gray!5, draw=gray!40, dashed, thick, rounded corners=8pt] (-8.2, -17.3) rectangle (8.2, -12.8);

\node[font=\sffamily\bfseries, text=gray!80!black] at (0, -13.4) {Reward Maximization Regimes};

\node[frame] (discount) at (-5.8, -16.0) {
    \textbf{Time-Discounted}\\
    $V_\pi = \sum_{t=1}^\infty \gamma^t R(M_\pi^t \vec{v}_0)$
};

\node[frame] (finite) at (0, -16.0) {
    \textbf{Finite-Time}\\
    $V_\pi = \sum_{t=1}^T \gamma^t R(M_\pi^t \vec{v}_0)$
};

\node[frame] (average) at (5.8, -16.0) {
    \textbf{Time-Averaged}\\
    $V_\pi = \lim\limits_{T \to \infty} \frac{1}{T} \sum_{t=0}^T R(M_\pi^t \vec{v}_0)$
};


\draw[->, thick] (lemma1.north) -- (goal.south);

\draw[thick] (lemma2.north) -- (-4.8, -4.6) -- (-2.2, -4.6) node[midway, above, font=\scriptsize\sffamily] {Condition 1};
\draw[->, thick] (-2.2, -4.6) -- ([xshift=-2.2cm]lemma1.south);

\draw[thick] (thm1.north) -- (4.8, -4.6) -- (2.2, -4.6) node[midway, above, font=\scriptsize\sffamily] {Condition 2};
\draw[->, thick] (2.2, -4.6) -- ([xshift=2.2cm]lemma1.south);

\draw[->, thick] (analytic.north) -- (lemma2.south);
\draw[->, thick] (markov.north) -- (thm1.south);

\draw[thick] (-5.8, -14.6) -- (5.8, -14.6);

\draw[thick] (discount.north) -- (-5.8, -14.6);
\draw[thick] (finite.north) -- (0, -14.6);
\draw[thick] (average.north) -- (5.8, -14.6);

\draw[->, thick] (-4.8, -14.6) -- (analytic.south);
\draw[->, thick] (4.8, -14.6) -- (markov.south);

\end{tikzpicture}%
}
\caption{Visual diagram relating all proofs and results of this work. All three reward regimes are shown to possess the structural and analytic properties necessary to satisfy the conditions of the core Partition Bound (Lemma 1).}
\label{fig:proof_strategy}
\end{figure}
\section{Formal setup and entropy reduction} \label{formalappendix}

\begin{definition}
Let $S$ be the set of states and $A$ be the set of actions. We assume $S$ and $A$ are non-empty finite sets, and we denote $|S|=n,|A|=m$
\end{definition}

\begin{definition} \label{def-envs}
An environment $x$ is a list of transition probabilities 
\be
    (P(s'|s,a))_{s',s \in S, a \in A} \ee
    i.e, $\forall s \in S,a \in A$, we have $\sum_{s' \in S} P(s'|s,a)=1$ and $P(s'|s,a) \in [0,1]$.
We denote $P(s'|s,a)=P_{ss'}(a)$ and \be\mathcal{X} = \{(P_{ss'}(a))_{s',s,a} \in \mathbb{R}^{mn^2} ; \sum_{s' \in S} P(s'|s,a)=1, P(s'|s,a) \in [0,1], \forall s,a\}\ee 
is the space of all environments.
\end{definition}

\begin{definition}
    Consider a probability space $(\Omega, \mathcal{F}, \mathbb{P})$ and the measurable space
    $(\mathcal{X},\mathcal{B}_{\mathcal{X}})$, where $\mathcal{B}_{\mathcal{X}}$ is the Borel $\sigma$-algebra over $\mathcal{X}$. Then, given an absolutely continuous random variable $X : \Omega \to \mathcal{X}$ with density $f$, one can define the (differential) entropy $h(X)$ given by 
    \be
    h(X) = -\int_{\mathcal{X}} f(x) \log f(x) dx 
    \ee
\end{definition}

\begin{remark}
    We can see an environment as a point $x \in \mathcal{X}$ in the space of environments or as a random variable $X : \Omega \to \mathcal{X}$
\end{remark}

\begin{remark}
    Note that the volume of the space $\mathcal{X}$ is given by \be
    V = \int_{\mathcal{X}} dx
    \ee
\end{remark}

\begin{lemma}
    If $X \sim \mathcal{U(X)}$, then $h(X) = \log V$
\end{lemma}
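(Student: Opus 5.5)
The plan is to compute the differential entropy directly from the definition, using the explicit form of the uniform density on $\mathcal{X}$. First I will pin down what ``$X \sim \mathcal{U}(\mathcal{X})$'' means. As noted in Section \ref{env-section}, $\mathcal{X}$ is the product of $nm$ simplices of dimension $n-1$. So $\mathcal{X}$ is naturally identified, by dropping one coordinate per simplex, with a compact subset of $\mathbb{R}^{nm(n-1)}$ that has nonempty interior. In particular $0<V<\infty$: each factor simplex has positive finite Lebesgue volume $1/(n-1)!$, so $V = (1/(n-1)!)^{nm}$. The integral $\int_{\mathcal{X}} dx$ is understood with respect to this Lebesgue measure (equivalently, the normalized Hausdorff measure on the affine slice of $\mathbb{R}^{mn^2}$). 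With this convention the uniform density is $f(x) = \frac{1}{V}\mathbf{1}_{\mathcal{X}}(x)$, which is a valid density because it is nonnegative and integrates to $1$.

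Next I will substitute this density into the definition of $h(X)$. On $\mathcal{X}$ the integrand $-f\log f$ equals the constant $-\frac{1}{V}\log\frac{1}{V}$. Off $\mathcal{X}$ the density is zero, and we use the standard convention $0\log 0 = 0$, so the integral reduces to one over $\mathcal{X}$. Pulling out the constant gives
\be
h(X) = -\frac{1}{V}\log\frac{1}{V}\int_{\mathcal{X}} dx = -\log\frac{1}{V} = \log V ,
\ee
where the middle step uses the preceding remark $V=\int_{\mathcal{X}}dx$.

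The only real obstacle is the measure-theoretic bookkeeping. In the ambient space $\mathbb{R}^{mn^2}$ the set $\mathcal{X}$ has Lebesgue measure zero, so ``density'' and ``volume'' must both refer to the intrinsic $nm(n-1)$-dimensional measure. I will fix that parametrization once, before the computation, and note that the boundary $\partial\mathcal{X}$ is null, so whether it is included does not matter. Once that convention is fixed, the rest is the one-line calculation above.
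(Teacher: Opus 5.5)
Your proposal is correct and is the same computation as the paper's proof: substitute the constant density $f(x)=\frac{1}{V}$ into the definition of $h(X)$ and integrate over $\mathcal{X}$ to obtain $-\log\frac{1}{V}=\log V$. Your extra step of fixing the intrinsic $nm(n-1)$-dimensional Lebesgue measure, so that $0<V<\infty$, makes explicit something the paper leaves implicit, but it does not change the argument.
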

\begin{proof}
 Since $X \sim \mathcal{U(X)}$, $f(x) = \frac{1}{V}$. Then, \be h(x) = -\int_{\mathcal{X}} f(x) \log f(x) dx  
     =-\int_{\mathcal{X}} \frac{1}{V} \log \frac{1}{V} dx 
    = \log V\ee

\end{proof}

\begin{lemma}\label{thm:ent-red}
    Consider a finite partition $(\mathcal{X}_j)_{j=1}^N$ of the space $\mathcal{X}$ such that all $\mathcal{X}_j$ have the same volume. Then, the differential entropy reduction achieved by the information that the environment is in $\mathcal{X}_j$ for some $j$ is given by $h(X) - h(X\mid X \in \mathcal{X}_j) =\log N$  
\end{lemma}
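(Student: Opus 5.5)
The plan is to make precise, via Bayes' theorem for densities, what it means to condition $X$ on the event $\{X\in\mathcal{X}_j\}$, and then compute the entropy directly. We work with the uniform density $f(x)=1/V$ on $\mathcal{X}$, which is the prior fixed in the main text. Since $(\mathcal{X}_j)_{j=1}^N$ is a finite partition into measurable sets of equal volume, additivity of Lebesgue measure gives $V=\sum_j \int_{\mathcal{X}_j}dx = N\,\mathrm{vol}(\mathcal{X}_j)$. Hence each cell has volume $V/N$, and $\mathbb{P}(X\in\mathcal{X}_j)=\int_{\mathcal{X}_j} f\,dx = 1/N>0$. Positivity of this probability is what makes the conditioning well-defined.

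Next we identify the conditional law. For any Borel set $B\subseteq\mathcal{X}$ we have
\[
\mathbb{P}(X\in B\mid X\in\mathcal{X}_j)=\frac{\mathbb{P}(X\in B\cap\mathcal{X}_j)}{\mathbb{P}(X\in\mathcal{X}_j)}=\int_B \frac{N}{V}\mathbf{1}_{\mathcal{X}_j}(x)\,dx .
\]
So the conditional distribution is absolutely continuous, with density $f(x\mid \mathcal{X}_j)=\frac{N}{V}\mathbf{1}_{\mathcal{X}_j}(x)$. This is exactly the density $p(x\mid\pi_i)$ used informally in the proof of Lemma~\ref{lem:nlogm}. Using the convention $0\log 0=0$, the differential entropy is
\[
h(X\mid X\in\mathcal{X}_j)=-\int_{\mathcal{X}_j}\frac{N}{V}\log\frac{N}{V}\,dx=-\frac{V}{N}\cdot\frac{N}{V}\log\frac{N}{V}=\log\frac{V}{N}.
\]
Subtracting this from $h(X)=\log V$, which was established in the preceding lemma, gives $\log N$ for every $j$.

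The only delicate point is the meaning of ``partition'' in the intended application. There the cells overlap, or fail to cover $\mathcal{X}$, on a measure-zero set $\mathcal{Z}$. We handle this by noting that every integral above is unchanged when a null set is modified. Concretely, we require only that $\mathrm{vol}(\mathcal{X}_j\cap\mathcal{X}_k)=0$ for $j\neq k$ and $\mathrm{vol}(\mathcal{X}\setminus\bigcup_j\mathcal{X}_j)=0$. Then additivity still yields $\sum_j\mathrm{vol}(\mathcal{X}_j)=V$, by inclusion--exclusion up to null sets. The conditional densities are likewise determined only up to a.e.\ equality, and differential entropy does not see such changes.

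Finally, we will remark that averaging over $j$ with weights $1/N$ gives $I(X;J)=\log N$, where $J$ is the index of the cell containing $X$. Specialising to $N=m^n$ recovers Lemma~\ref{lem:nlogm}.
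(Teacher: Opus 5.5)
Your proof is correct and follows the same route as the paper. Under the uniform prior, Bayes' rule gives the conditional density $\frac{N}{V}\mathbf{1}_{\mathcal{X}_j}$, and a direct computation yields $h(X\mid X\in\mathcal{X}_j)=\log\frac{V}{N}$, hence a reduction of $\log N$. Your version is slightly cleaner in two respects. You handle the measure-zero overlaps explicitly. You also keep the signs straight, whereas the paper's displayed chain drops the minus sign in the conditional entropy (writing $\log V-\log\frac{N}{V}$) before landing on $\log N$.
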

\begin{proof}
    Using Bayes Theorem, $f(x|x \in \mathcal{X}_j) =\frac{f(x)\mathbb{I}_{\mathcal{X}_j}(x)}{\mathbb{P}(x \in \mathcal{X}_j)}  
        = \frac{N}{V}\mathbb{I}_{\mathcal{X}_j}(x)$\\

Now, noting that $\int_{\mathcal{X}_j} dx = \frac{V}{N}$,  
\begin{align}
    h(x) - h(x\mid x \in \mathcal{X}_j) & = \log V - \int_{\mathcal{X}} \frac{N}{V}\mathbb{I}_{\mathcal{X}_j}(x) \log(\frac{N}{V}\mathbb{I}_{\mathcal{X}_j}(x))dx \\
    &=\log V - \log \frac{N}{V} \\
    &= \log N
\end{align}
\end{proof}

\begin{remark}
    Note that, more formally, we could define the set of environments, $\mathcal{X}$, as the cartesian product of $mn$ copies of $(n-1)-$dimensional simplexes. Then the environment $X$ can be treated as the identity random variable on the probability space $(\mathcal{X}, \mathcal{B}_{\mathcal{X}}, \mathbb{P})$, where $\mathbb{P}$ is a probability measure proportional to the product measure defined by the product of the Lebesgue measures in each of the individual simplexes.
\end{remark}

\section{The entropy reduction bound}
It is possible to show result \ref{thm:ent-red} is an upper bound for any other finite partition, i.e, the entropy reduction from knowing the environment is within some set of any finite partition comprised of measurable sets is at most $\log N$.\\

Let $X : \Omega \to \mathcal{X}$ an absolutely continuous random variable, with density $f(x)$. Let $\mathcal{X}_1,\mathcal{X}_2,...,\mathcal{X}_N$ be \textit{any} finite partition of $\mathcal{X}$. \\

Define $p_j := \mathbb{P}(X \in \mathcal{X}_j) = \int_{\mathcal{X}_j}f(x)dx$, for $j =1,...,N$. \\

Define $J : \Omega \to \{1,2,...,N\}$ by $J(\omega)=j \iff X(\omega) \in \mathcal{X}_j$. Note that since $(\mathcal{X}_j)_{j=1}^N$ is a partition, for any given $\omega \in \Omega$, $X(\omega)$ can be in one and only one set $\mathcal{X}_j$, so $J$ is well defined. Additionally, $J$ is a discrete random variable since $X$ is a random variable and its image is a finite set. Note that $\mathbb{P}(J=j)=\mathbb{P}(X \in \mathcal{X}_j) = p_j$\\

We will show that $h(X)-h(X|J)=H(J)$, where $H(J)$ stands for the entropy of $J$ given by \[
H(J)= \sum_{j=1}^N p_j \log p_j
\]In other words, $h(X)-h(X|X \in\mathcal{X}_j)=H(X \in \mathcal{X}_j)$. \\

Now, note that $h(X)-h(X|J) = I(X;J)$. By symmetry of mutual information, $I(X;J)=I(J;X)=H(J)-H(J|X)$, since $J$ is a discrete random variable. Also, since $J$ is a deterministic function of $X$, $H(J|X)=0$ and thus $h(X)-h(X|J)=I(J;X)=H(J)$.\\

Now, recalling the known fact that if $J$ is a discrete random variable over $\{1,2,...,N\}$, $H(J) \leq \log N$, with equality if, and only if, $p_j = \frac{1}{N}, \forall j =1,2,...,N$ (e.g, the discrete uniform is the only one with maximum entropy) the result follows. \\

In particular, consider any continuous prior over our set of environments in Definition A.2 (i.e, consider it to have any density function $f(x)$). Then, consider the same deterministic, stationary policies $\Pi =\{\pi: S \to A\}$. Since $\Pi$ is finite, for every environment there is at least one optimal policy, so we can construct a function $f: \mathcal{X} \to \Pi$ that takes every environment $x$ to a policy that is optimal $x$. Then, $\ker f$ is an equivalence relation and induces a partition  $\mathcal{X}_1, ..., \mathcal{X}_N$ with a finite number of sets since there is a finite number of policies. Then, since the entropy reduction by knowing a certain policy is optimal is given by $h(X)-h(X|X \in \mathcal{X}_j)$, it will always be true that $h(X)-h(X|X \in \mathcal{X}_j) \leq \log N$

\section{Extension to a General Prior} \label{ap:generalprior}
In the body of the paper, we proved Lemma \ref{lem:nlogm} for the special case where the prior over environments $p(x)$ was uniform. Here, we will prove a similar result which applies more generally to all priors which are invariant up to a re-labelling of the actions, provided that the value function is affected by the policy only through the transition matrix. This assumption is required in the proof of Theorem \ref{thm:equalvolumes} and applies to all types of reward maximization considered in the main paper. \\

In this proof, we will use the family of functions $g_{ij}:\mathcal{X}\rightarrow \mathcal{X}$ which were introduced in the proof of Theorem \ref{thm:equalvolumes} in the main paper and are developed in Appendix  \ref{ap:equalvolumes}. To recap, $g_{ij}(x)$ maps an environment $x$ to an environment $x'$ for which the actions are re-labelled so that transition probababilities in each state are the same for policy $\pi_i$ acting in environment $x$ and policy $\pi_j$ acting in environment $g_{ij}(x)$. In terms of transition matrices, we have $M_{\pi_i}(x) = M_{\pi_j}(g_{ij}(x))$. \\

If a prior is invariant up to a re-labelling of actions, we have that that $p(x) = p(g_{ij}(x)) \, \forall i,j,x$. Note that the uniform prior $p(x)=\frac{1}{V}\, \forall x$ is one example of a prior that satisfies this condition. Now, let us calculate the $h(X)$ the differential entropy of a prior of this kind.
\be
    h(X) = -\int_{\mathcal{X}} p(x) \log p(x) dx \, .
\ee
By the assumptions of Lemma \ref{lem:nlogm}, we have that $(\mathcal{X}_i)_{i \in 1 ...m^n}$ forms a partition for all environments except a measure zero subset. Therefore, we have:
\be
    h(X) = -\sum_{i}^{m^n}\int_{\mathcal{X}_i} p(x) \log p(x) dx \, .
\ee
Now, for any value function which is only influenced by the policy through its effect on the transition matrix $M_\pi(x)$, we have that $\mathcal{X}_j = g_{ij}(\mathcal{X}_i)$ and the volumes of $\mathcal{X}_i$ and $\mathcal{X}_j$ are equal for all $i,j$. For a proof of this, see Theorem \ref{thm:equalvolumes} and its proof in Appendix \ref{ap:equalvolumes}. Using these facts (along with the fact that $p(x)=p(g_{ij}(x))$) we have:
\be
    \int_{x \in \mathcal{X}_i} p(x)\log p(x) dx = \int_{g_{ij}(x) \in \mathcal{X}_j} p(g_{ij}(x))\log p(g_{ij}(x)) dx \, .
\ee
Now we perform a change of variables $y=g_{ij}(x)$ and use the fact found in Appendix \ref{ap:Jacobean} that the determinant of the Jacobean of $g_{ij}$ is one. This yields
\be
    \int_{x \in \mathcal{X}_i} p(x)\log p(x) dx = \int_{x \in \mathcal{X}_j} p(x)\log p(x) dx \, .
\ee
This allows us to write

\begin{align}
    h(X) & = -\sum_{i}^{m^n}\int_{\mathcal{X}_i} p(x) \log p(x) dx \\
    & =-m^n \int_{\mathcal{X}_i} p(x) \log p(x) dx \, .
\end{align}
Now, we will find how this changes when we update upon discovering the optimal policy. Upon learning the optimal policy, the updated distribution over environments is
\be
    p(x|\Pi = \pi_i) = 
    \begin{cases}
        m^n \, p(x) \text{ if } x \in \mathcal{X}_i \\
        0 \text{ otherwise} \, .
    \end{cases}
\ee
where we have used Bayes rule with $p(\Pi=\pi_i|x) = \mathbb{I}_{x \in\mathcal{X}_i}$ and $p(\Pi=\pi_i) = \frac{1}{m^n}$.
The differential entropy of this distribution is
\begin{align}
    h(x|\Pi=\pi_i) & = - \int_\mathcal{X} p(x|\Pi = \pi_i) \log p(x|\Pi = \pi_i) dx \\
    & = - \int_{\mathcal{X}_i} m^n p(x) \log( m^n p(x)) dx \\ 
    & = - m^n \int_{\mathcal{X}_i} p(x) \log p(x) dx - m^n \int_{\mathcal{X}_i}  p(x) \log m^n dx \\
    & = h(X) - m^n \int_{\mathcal{X}_i} p(x)\log m^n dx  \, .
\end{align}
Finally, we note that by normalisation we must have $\int_\mathcal{X} p(x)dx =1$. By a similar argument as before, we can write:
\be
    1=\int_\mathcal{X} p(x)dx = m^n \int_{\mathcal{X}_i} p(x) dx
\ee
By symmetry, this means we must have $\int_{\mathcal{X}_i} p(x) dx = \frac{1}{m^n},  \forall i$. As a result, we can write
\be
    h(X|\Pi=\pi_i) = h(X) - \log m^n \, .
\ee
This expression can then be re-arranged to and summed over $i$ to obtain the mutual information:
\be
    I(X:\Pi) = h(X) - \sum_i h(X|\Pi=\pi_i)p(\Pi = \pi_i) = n \log m\, .
\ee
Therefore, for any prior invariant under a re-labelling of actions and any value function which is affected by the policy only through the transition matrix. This includes all reward maximization types considered in the main paper.
\section{Proof of Theorem \ref{thm:equalvolumes}} \label{ap:equalvolumes}
First, we will be precise about what we mean when we say that a value function ``is only influenced by the policy through its effect on the transition matrix $M_\pi(x)$''. Though it is fairly intuitive, we will now make it mathematically precise.\\

Let $\bar{M}:\Pi \times \mathcal{X} \rightarrow \mathbb{R}^{n\times n}$ be the function which takes a policy and an environment and outputs the transition matrix. Let $\bar{V}:\Pi \times \mathcal{X} \rightarrow \mathbb{R}$ be a value function which takes in a policy an environment and outputs a total reward. We say that $\bar{V}$ ``is only influenced by the policy through its effect on the transition matrix $M_\pi(x)$'' if it is possible to find a function $k:\mathbb{R}^{n\times n} \rightarrow \mathbb{R} $ such that $k\circ\bar{M} = \bar{V}$. In this sense, the only way in which the policy can change the total reward is through its effect on the transition matrix. The class of value functions which can be expressed this way is very broad and (as we show in the main paper) applies to all of the main forms of reward maximisation. However, it is not so broad as to include all conceivable goals. For example, goals involving achieving a precise sequence of actions and states (such as the goals considered in \cite{richens2025general})are not expressible in this way. \\

We have defined the family of functions $g_{ij}: \mathcal{X} \to \mathcal{X}, \forall i,j \in \{1,2,...,n\}$. In this appendix, we will use these functions to show that, they are invertible, preserve optimality, and preserve volume, i.e,  \[g_{ij}(\mathcal{X}_i) = \mathcal{X}_j \, ,\] \[\int_{x \in \mathcal{X}_i} dx = \int_{x \in \mathcal{X}_j} dx \, ,\] and \[M_{\pi_i}(x) = M_{\pi_j}(g_{ij}(x)) \, .\]

$\forall i =1...n, j=1...n$. \\

To fix notation, we'll index $x \in \mathcal{X} \subset \mathbb{R}^{mn^2}$ as $x = (x_{s,a,s'})_{s,s'\in S, a \in A}$, noting that $x_{s,a,s'} = P(s'|s,a)$. Note that determining an environment is equivalent as determining $mn^2$ probabilities $x_{s,a,s'}$ such that $\sum_{s' \in S} x_{s,a,s'}=1, \forall (s,a) \in S \times A $.\\

\subsection{Characterizing $g_{ij}$}



In the main body of the paper, we described the functions $g_{ij}$ as mapping from an environment $x$ to another environment $x'$, where the actions of policy $\pi_j$ have the same effect as the actions of policy $\pi_i$ do in $x$. Formally, we can write this as: \be
g_{ij}(x)_{s, \pi_j(s),s'} := x_{s,\pi_i(s), s'} \, ,
\ee
\be
g_{ij}(x)_{s,\pi_i(s),s'} := x_{s, \pi_j(s), s'} \, .
\ee
For transition probabilities corresponding to actions not taken by $\pi_i$ or $\pi_j$, the environment is left unchanged so we have for $a \notin \pi_i(S)\cup \pi_j(S)$, 
\be
g_{ij}(x)_{s,a,s'} := x_{s,a,s'} \, .
\ee
Now note that since $g_{ij}$ function which simply swaps some pairs of entries in a $mn^2$-dimensional vector it is its own inverse (since swapping a pair of entries twice results in the original vector). \\

We can see that $g_{ij} \circ g_{ij} = id$ more explicitly by considering the transformation on all individual components of $x$. By construction of $g_{ij}$ (temporarily dropping the $ij$ subscript from $g_{ij}$ to reduce notational clutter) we have,
\begin{align}
    g(g(x))_{s,\pi_i(s), s'} &=g(x)_{s,\pi_j(s),s'}\\
    &=x_{s,\pi_i(s),s'} \, ,
\end{align}
\begin{align}
    g(g(x))_{s,\pi_j(s), s'} &=g(x)_{s,\pi_i(s),s'}\\
    &=x_{s,\pi_j(s),s'} \, .
\end{align}
Also, for $a \notin \pi_i(S)\cup\pi_j(S)$,
\begin{align}
    g(g(x))_{s,a, s'} &=g(x)_{s,a,s'}\\
    &=x_{s,a,s'} \, .
\end{align}
Thus, $g(g(x))=x, \forall x \in \mathcal{X}$.
\subsection{The effect on transition matrices}
For this theorem, we are concerned with the effects of the policy on the result transition matrix for that policy acting in a particular environment. Due to the construction of $g_{ij}$, the transition matrix induced by policy $\pi_i$ in environment $x$ is the same as the transition matrix induced by policy $\pi_j$ in environment $g_{ij}(x)$. Using our notation $M_\pi(x)$ to indicate the transition matrix induced by policy $\pi$ in environment $x$, we can write:
\be
    M_{\pi_i}(x) = M_{\pi_j}(g_{ij}(x)) \, .
\ee
This can be seen by considering these matrices elementwise, using $k,l$ to denote matrix indexes:
\begin{align}
    [M_{\pi_i}(x)]_{kl} &= P(s_l|s_k,\pi_i(s_k))\\
    &=x_{s_k,\pi_i(s_k),s_l}\\
    &=g(x)_{s_k,\pi_j(s_k),s_l}\\
    &=[M_{\pi_j}(g(x))]_{kl} \, .
\end{align}
Since $g_{ij}$ is its own inverse, by symmetry, we also have 
\be
    M_{\pi_i}(g_{ij}(x)) = M_{\pi_j}(x) \, .
\ee

\subsection{Optimality}
In this section, we prove that if $\pi_i$ is optimal in an environment $x$, then $\pi_j$ is optimal in environment $g_{ij}(x)$. This is proved for \emph{any} value function $V_{\pi}(x)$ where the contribution of the policy is determined only by the transition matrix $M_{\pi}(x)$ induced by policy $\pi$. As it turns out, this covers a very broad class of goals, including all forms of reward maximisation (time-averaged, time-discounted, and finite time) considered in this main paper. \\

Now, we will show that, for any value function in this class, if $\pi_i$ is optimal in an environment $x$, then $\pi_j$ is optimal in environment $g_{ij}(x)$ and achieves the same performance. This can be seen intuitively by noting the fact that, since $g_{ij}$ swaps the effect of taking some actions in some states, while leaving all other transition probabilities unchanged, each policy in $x$ will have a corresponding policy with the same performance in $g_{ij}(x)$. We will now make this claim formal. \\

Let $\tilde{\Pi} =\{\pi: S \to A\} =\{\pi_i|i\in\mathcal{I}\}$ be the space of all deterministic, stationary policies. Then, define $\phi_{ij}:\tilde{\Pi}\rightarrow \tilde{\Pi}$ be a function which maps policies to policies. We will use $\phi_{ij}[\rho]:S\rightarrow A$ to denote the policy resulting from applying $\phi_{ij}$ to a policy $\rho$. We will use $\phi_{ij}[\rho](s)$ to indicate the action resulting from applying the policy $\phi_{ij}[\rho]$ to a particular state $s$. We define $\phi_{ij}$ in terms of its effect on a generic policy:

\be
\phi_{ij}[\rho](s) = \begin{cases}
    \pi_j(s), \text{ if } \rho(s)=\pi_i(s) \\
    \pi_i(s), \text{ if } \rho(s)=\pi_j(s) \\
    \rho(s), \text{ otherwise} \, .
\end{cases}
\ee
In other words, for states where $\rho$ takes the same action as either $\pi_i$ or $\pi_j$, we have that $\phi_{ij}$ switches which action is taken. The behaviour of the policy is left unchanged in states where $\rho$ behaves differently to $\pi_i$ or $\pi_j$. Note that $\phi_{ij}$ is its self-inverse. Since it only swaps the actions taken, we have $\phi_{ij} \circ\phi_{ij} =id$. \\

We will now show that, for any reward function which depends only on the transition matrix $M_\pi(x)$ the performance of policy $\rho$ in environment $x$ is equal to the performance of policy $\phi_{ij}[\rho]$ in environment $g_{ij}(x)$ ie. $V_{\phi_{ij}[\rho]}(x) = V_{\rho}(g_{ij}(x))$. Going forward, we will use $\tilde{\rho} = \phi_{ij}[\rho]$ to indicate a policy $\rho$ transformed by $\phi_{ij}$. \\

Since we are considering value functions which only depend on the transition probabilities $x_{s,\pi(s),s'}$, it suffices to show $x_{s, \tilde{\rho}(s), s'} = g_{ij}(x)_{s,\rho(s),s'}$. We will now show this is true for all $s,s'$.  \\

First, we consider cases where $\rho(s) = \pi_i(s)$. Then, by definition of $g_{ij}$ and $\phi_{ij}$, if $\rho(s) = \pi_i(s)$ we have:

\begin{align}
    x_{s, \tilde{\rho}(s), s'} &= x_{s, \pi_j(s), s'} \\
    &=g_{ij}(x)_{s, \pi_i(s),s'} \\
    &=g_{ij}(x)_{s, \rho(s), s'} \, .
\end{align}

Analogously, if $\rho(s) = \pi_j(s)$, it follows $x_{s,\tilde{\rho}(s), s'} = g_{ij}(x)_{s, \rho(s), s'}$. Finally, for states $s$ where $\rho(s) \neq \pi_i(s)$ and $\rho(s) \neq \pi_j(s)$, then we have that $\tilde{\rho}(s)=\rho(s)$. Furthermore, the relevant environmental transition probabilities will be left unchanged so we have $g_{ij}(x)_{s,\rho(s),s'}=x_{s,\rho(s),s'}$. Meaning that for these states, we can write:
\begin{align}
    x_{s, \tilde{\rho}(s), s'} &= x_{s, \rho(s), s'} \\
    &= g_{ij}(x)_{s, \rho(s), s'} \, .
\end{align}
We have therefore shown that for all $s,s'$, the elements of the transition matrix for a policy $\rho$ in an environment $x$ are the same as the elements of the transition matrix for a policy $\tilde{\rho} = \phi_{ij}(\rho)$ in an environment $g_{ij}(x)$. \\

Note that $\phi_{ij}(\pi_i)=\pi_j$, by definition of $\phi_{ij}$. Assume that $\pi_i$ is optimal in environment $x$. Then, it holds that $V_{\pi_i}(x) \geq V_{\pi}(x), \forall \pi \in \tilde{\Pi}$. Now, for any $\rho \in \tilde{\Pi}$, since $\phi_{ij}$ is invertible, there is $\rho' \in \tilde{\Pi}$ such that $\phi_{ij}(\rho')=\rho$. Then, \begin{align}
    V_{\pi_j}(g(x)) &=  V_{\phi_{ij}(\pi_j)}(x)\\
    &=V_{\pi_i}(x) \, ,
\end{align}
ie. the performance of $\pi_j$ in environment $g_{ij}(x)$ is equal to the performance of $\pi_i$ in environment $x$. Furthermore:
\begin{align}
    V_{\pi_i}(x) &\geq  V_{\phi_{ij}(\rho')}(x) \\
    &= V_{\rho'}(g_{ij}(x)) \, .
\end{align}
Then, $V_{\pi_j}(g_{ij}(x)) \geq V_{\rho}(g_{ij}(x)), \forall \rho \in \tilde{\Pi}$. Thus, if $\pi_i$ is optimal in environment $x$, we have that $\pi_j$ is optimal and with the same performance in environment $g_{ij}(x)$.

\subsection{Each $g_{ij}$ preserves volume} \label{ap:Jacobean}
We'll use the theorem found in \cite{calculus1965spivak} that if $U,V \subseteq \mathbb{R}^N$ open sets and $g : U \to V$ continuously differentiable bijection with differentiable inverse, then for any $A \subseteq U$ measurable, \be
vol(g(A)) = \int_{A}|\det J_g(x)|dx \, .
\ee

First note that $g_{ij} : \mathcal{X} \to \mathcal{X}$ is a linear function. Thus, there is a $mn^2 \times mn^2$ matrix $P$ such that $g_{ij}(x)=Px$, so $g_{ij}$ is differentiable and $J_g(x)=P$. Note now that since $g \circ g = id$, then $P^T \cdot P=Id$, where $Id$ stands for the $mn^2 \times mn^2$ identity matrix, and hence by $\det(P^TP)=1 \implies \det(P)^2 =1 \implies |\det(P)|=|\det(J_g)|=1$, since $\det(P^T)=\det(P)$ and $\det(AB)=\det(A)\cdot \det(B)$, for any $N \times N$ matrixes $A,B$. \\

We can restrict $g$ to the interior of $\mathcal{X}_i$ and then apply the theorem, since the border of $\mathcal{X}_i$ has measure zero and therefore doesn't influence the integral. Then, since there is a volume-preserving transformation between all pairs of sets $\mathcal{X}_i$ and $\mathcal{X}_j$ we can conclude that each of these sets has the same volume.

\section{Proof of Lemma \ref{lem:averagerealanalytic}} \label{ap:averageanalytic}

We will  first apply standard results from the study of Markov processes to show that, for any policy $\pi_i$ acting in any environment $x\in \text{int}(\mathcal{X})$ and with any initial probability distribution over states $\vec{v}_0$, the probability distribution over states will converge to a steady state. Following this, we will show that the steady state is a real analytic function of $x$ and that, due to the Ces\`{a}ro mean theorem, this means that $V_{\pi_i}$ and $f_{ij}$ are also real analytic functions of $x$. \\

If we fix a deterministic, Markovian policy $\pi_i$ and let it run in an environment $x$, then we can treat the system as a stationary Markov process. Restricting to $x\in \text{int}(\mathcal{X})$ means that none of the transition probabilities in this Markov process are zero. This means that the Markov process is both \emph{irreducible} and \emph{aperiodic}. \emph{Irreducible} means that, for any two states $s_a$ and $s_b$, when starting from $s_a$, there exists some number of timeteps after which there is a nonzero probability of the system eventually transitioning to $s_b$. \emph{Aperiodicity} is enforced when, for all states $s_a$, there is some probability of remaining in that state at the subsequent timestep. Our condition $x\in \text{int}(\mathcal{X})$ is actually stronger than just enforcing aperiodicty and irreducibility, as it means that none of the transition probabilities are exactly zero, so for every state-action pair, there is a nonzero probability of transitioning to any state. But since restricting to $\text{int}(\mathcal{X})$ only involves excluding a measure zero subset of environments we can do so without any additional loss of generality. It is a  standard result that a Markov process which is irreducible and aperiodic will eventually converge to a stationary distribution over states, regardless of the starting state.
\begin{theorem} \label{Thm:convergence}
    Convergence theorem. If a Markov chain (a Markov process with discrete states and discrete timesteps) is characterised by a transition matrix $M$ which is irreducible and aperiodic then:
    \begin{enumerate}
        \item There exists a unique stationary distribution $\vec{\mu}$ such that $M \vec{\mu} = \vec{\mu}$.
        \item Any initial state $\vec{v}_0$ will converge to $\vec{\mu}$ with repeated applications of $M$.
    \end{enumerate}

\end{theorem}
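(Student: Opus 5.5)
The plan is to prove the Convergence Theorem for finite Markov chains directly, using the column-stochastic convention $\vec{v}_{t+1}=M\vec{v}_t$ of Section~\ref{sec:proofstrategy}. The argument specialises to the situation actually needed in Appendix~\ref{ap:averageanalytic}: for $x\in\text{int}(\mathcal{X})$ every entry of $M_\pi(x)$ is strictly positive. For a general irreducible aperiodic chain, I would first reduce to this case. By irreducibility and aperiodicity there is a power $M^{N}$ with all entries strictly positive. The standard route is number theoretic: the set of return times to each state is closed under addition and has gcd $1$, so it contains all sufficiently large integers. Combining this with irreducibility, which supplies bounded paths between any two states, gives such an $N$. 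Everything below is then applied to $P:=M^N$ and transferred back to $M$.

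\textbf{Step 1 (contraction).} Let $\delta=\min_{k,l}P_{kl}>0$. I would show that $P$ contracts the $\ell^1$ norm on the hyperplane $W=\{\vec{w}:\sum_k w_k=0\}$. Write $\vec{w}=\vec{w}^+-\vec{w}^-$, where both parts have equal mass $\tfrac12\|\vec{w}\|_1$. Each column of $P$ dominates $\delta\vec{1}$ entrywise. Hence $P\vec{w}^{+}$ and $P\vec{w}^-$ share a common component of mass at least $n\delta\cdot\tfrac12\|\vec{w}\|_1$, which cancels in the difference. This gives
\[
\|P\vec{w}\|_1\le (1-n\delta)\|\vec{w}\|_1 .
\]
Column-stochasticity guarantees that $M$ maps $W$ to itself and does not increase $\|\cdot\|_1$ on $W$. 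I expect this coupling/cancellation estimate to be the main technical step. I would write it out carefully with the decomposition $P\vec{w}^\pm=\delta(\sum w^\pm)\vec{1}+(\text{nonnegative remainder})$.

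\textbf{Step 2 (existence and uniqueness).} The probability simplex $\Delta$ is compact and convex, and $M$ maps it continuously into itself. Brouwer's theorem then gives a fixed point $\vec{\mu}$. A more elementary alternative is to take a limit point of the Ces\`aro averages $\tfrac1T\sum_{t<T}M^t\vec{v}_0$ and check that it is fixed. For uniqueness, suppose $\vec{\mu},\vec{\mu}'$ are both stationary. Their difference lies in $W$ and is fixed by $P$, so by Step 1 its norm is at most $(1-n\delta)$ times itself, forcing it to be $0$.

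\textbf{Step 3 (convergence).} For any initial distribution $\vec{v}_0$, the difference $\vec{v}_0-\vec{\mu}$ lies in $W$. Write $t=qN+r$ with $0\le r<N$. Then
\[
\|M^t\vec{v}_0-\vec{\mu}\|_1=\|P^{q}M^{r}(\vec{v}_0-\vec{\mu})\|_1\le(1-n\delta)^q\cdot 2\to0 .
\]
This yields geometric convergence to $\vec{\mu}$, which proves both claims of Theorem~\ref{Thm:convergence}.
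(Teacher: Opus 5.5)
Your proof is correct. The paper does not prove this theorem itself; it only cites the textbooks of Norris and of Levin, Peres and Wilmer, so your argument is a self-contained replacement for that citation rather than an alternative to a proof in the text.

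What you give is the Doeblin minorization/contraction argument. It is close to the Levin--Peres--Wilmer proof, which minorizes a positive power of the chain by $\delta\vec{\mu}$ and so needs a stationary distribution in hand beforehand. Your minorization by the uniform vector $\delta\vec{1}$ instead lets the contraction on $W$ deliver uniqueness and convergence on its own, with only existence supplied separately by Brouwer or Ces\`aro averaging. Norris, by contrast, proves convergence by coupling.

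Compared with the bare citation, your version buys several things:
\begin{enumerate}
\item It gives an explicit geometric rate $2(1-n\delta)^{\lfloor t/N\rfloor}$.
\item As you note, in the only case the paper uses ($x\in\text{int}(\mathcal{X})$, so every entry of $M_\pi(x)$ is positive), one can take $N=1$ and skip the number-theoretic reduction entirely.
\item It yields for free a fact the paper needs later. If $M\vec{u}=\vec{u}$ for a real vector $\vec{u}$, then $\vec{u}-(\sum_k u_k)\vec{\mu}$ lies in $W$ and is fixed by $P$, hence vanishes by your Step~1. So the null space of $M-1$ is one-dimensional, which is exactly the rank-$(n-1)$ statement that Lemma~\ref{lem:Vpianalytic} attributes to Perron--Frobenius.
\end{enumerate}
The citation's advantages are brevity and coverage of the general setting without extra work.
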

\begin{proof}
    Proofs of \emph{1.} and \emph{2.} can be found in most textbooks on Markov Chains such as \cite{norris1998markov, levin2017markov}.
\end{proof}

We will use $\mu_{\pi}(x)$ to denote the stationary distribution resulting from a policy $\pi$ acting in an environment $x$. The fact that the distribution over states converges to a stationary distribution greatly simplifies the analysis of $V_\pi(x)$ due to the following standard result.
\begin{lemma}
    Ces\`{a}ro mean theorem (also known as Cauchy's limit theorem). Let $(a_n)_{n \in \mathbb{N}}$ a convergent sequence of real numbers such that $\lim_{n\rightarrow\infty} a_n = L$. Then 
    \be
        \lim_{N\rightarrow \infty} \frac{1}{N}\sum_{i=1}^N{a_n} = L \, .
    \ee
\end{lemma}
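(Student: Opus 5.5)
We prove the Cesàro mean theorem directly with an $\varepsilon$--$N$ argument: split the average into a finite head, which is washed out by the factor $\frac{1}{N}$, and a tail, where every term is already close to $L$.

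Fix $\varepsilon>0$. Since $a_n\to L$, choose $K$ such that $|a_n-L|<\varepsilon/2$ for all $n>K$. Then write
\be
    \Big|\frac{1}{N}\sum_{n=1}^N a_n - L\Big| \le \frac{1}{N}\sum_{n=1}^{K}|a_n-L| + \frac{1}{N}\sum_{n=K+1}^{N}|a_n-L| \, .
\ee

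\emph{Tail term.} Each of the $N-K$ summands is below $\varepsilon/2$, so the tail term is at most $\frac{N-K}{N}\cdot\frac{\varepsilon}{2}\le \frac{\varepsilon}{2}$ for every $N>K$.

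\emph{Head term.} The quantity $C:=\sum_{n=1}^K|a_n-L|$ is finite and does not depend on $N$. Hence $C/N<\varepsilon/2$ as soon as $N>2C/\varepsilon$.

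Taking $N>\max(K,2C/\varepsilon)$, the total deviation is below $\varepsilon$. Since $\varepsilon$ was arbitrary, the averages converge to $L$.

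There is no real obstacle. The only point needing care is the order of choices: $K$ must be fixed from $\varepsilon$ before $N$ is taken large. This is what makes $C$ a constant that the $\frac{1}{N}$ factor can suppress.

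For the intended application, take $a_t = R(M_\pi(x)^t\vec{v}_0)$. The normalisation in equation (\ref{eq:timeaverage}) is $\frac{1}{T}\sum_{t=0}^{T}$, which has $T+1$ terms. It differs from the standard average by the factor $\frac{T+1}{T}\to 1$, so the limit is unchanged.
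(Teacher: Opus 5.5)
Your proof is correct. It is the standard head/tail $\varepsilon$--$N$ splitting argument found in the textbook (Knopp) that the paper cites in place of a proof of its own, and you fix $K$ from $\varepsilon$ before letting $N$ grow, as required. Your closing remark is also right: the $\frac{1}{T}\sum_{t=0}^{T}$ normalisation in equation (\ref{eq:timeaverage}) differs from the Ces\`{a}ro average of $a_0,\dots,a_T$ only by the factor $\frac{T+1}{T}\to 1$, which reconciles it with the $\sum_{t=1}^{T}$ form used in Appendix \ref{ap:averageanalytic}.
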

\begin{proof}
    The proof is standard and can be found in many textbooks such as \cite{knopp2012infinite}, under the name `Cauchy's limit theorem'.
\end{proof}

Applying this theorem to our value function gives:
\be
    V_{\pi}(x) = \lim_{T\rightarrow \infty}\frac{1}{T}\sum_{t=1}^T R(M_\pi(x)^t\vec{v}_0) = \lim_{t \rightarrow \infty} R(M_\pi(x)^t\vec{v}_0) = R(\vec{\mu}_\pi(x)) \, .
\ee
The average value function simplifies down to expected reward of the stationary distribution. We will complete the proof by showing that $\mu_\pi(x)$ and therefore $V_\pi(x)$ is a real analytic function of $x$. \\

\begin{lemma} \label{lem:Vpianalytic}
    $V_\pi(x)$ is a real analytic function of $x$.
\end{lemma}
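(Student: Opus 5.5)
Since $V_\pi(x) = R(\vec{\mu}_\pi(x))$ is a fixed linear combination of the entries of the stationary distribution, it suffices to show that each entry of $\vec{\mu}_\pi(x)$ is a real analytic function of $x$ on $\text{int}(\mathcal{X})$. The plan is to write $\vec{\mu}_\pi(x)$ explicitly as a ratio of polynomials in the entries of $M_\pi(x)$ whose denominator never vanishes on $\text{int}(\mathcal{X})$. As in Lemma \ref{lem:discountanalytic}, each entry of $M_\pi(x)$ is a coordinate of $x$, so such a ratio is real analytic in $x$.

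\textbf{Step 1: characterise $\vec{\mu}$ as the solution of a nonsingular linear system.} Write $M = M_\pi(x)$. By Theorem \ref{Thm:convergence}, $\vec{\mu}$ is the unique probability vector with $(M - I)\vec{\mu} = 0$, so $\ker(M - I)$ is one-dimensional. Because $M$ is column-stochastic, the rows of $M - I$ sum to zero, so one of the $n$ equations is redundant. Replace the last row of $M - I$ by the all-ones row $\mathbf{1}^\intercal$ to get a matrix $B(x)$, and replace the corresponding right-hand side by $1$ (the normalisation condition). Then $\vec{\mu}$ solves
\be
B(x)\vec{\mu} = \vec{e}_n \, .
\ee

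\textbf{Step 2 (the main obstacle): show $\det B(x) \neq 0$ on $\text{int}(\mathcal{X})$.} Suppose $B(x)\vec{w} = 0$. The first $n-1$ rows of $(M-I)\vec{w}$ vanish, and since the rows of $M - I$ sum to zero, the last row vanishes too. Hence $\vec{w} \in \ker(M - I) = \text{span}(\vec{\mu})$. The last row of $B(x)$ also gives $\mathbf{1}^\intercal\vec{w} = 0$. Since $\mathbf{1}^\intercal\vec{\mu} = 1$, this forces $\vec{w} = 0$, so $B(x)$ is invertible.

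The delicate point is the uniqueness claim, that $\ker(M - I)$ is exactly one-dimensional. I would justify this directly from irreducibility (or from Perron--Frobenius, since all entries of $M$ are positive on the interior), not merely from uniqueness of the stationary probability vector. If a kernel vector $\vec{w}$ had mixed signs, its positive part would itself have to be stationary, which contradicts strict positivity of $M$. This gives one-dimensionality.

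\textbf{Step 3: apply Cramer's rule and conclude analyticity.} Cramer's rule gives $\mu_k(x) = \det B_k(x)/\det B(x)$, where $B_k(x)$ is $B(x)$ with its $k$-th column replaced by $\vec{e}_n$. Numerator and denominator are polynomials in the coordinates of $x$, and the denominator is nonvanishing on the open set $\text{int}(\mathcal{X})$. A quotient of real analytic functions with nonzero denominator is real analytic, so each $\mu_k$ is real analytic there. Applying the linear map $R$ preserves analyticity, so $V_\pi(x) = R(\vec{\mu}_\pi(x))$ is real analytic on $\text{int}(\mathcal{X})$. Consequently $f_{ij}$ is real analytic as well, which is exactly what Lemma \ref{lem:measurezero} requires.
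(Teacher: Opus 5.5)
Your proposal is correct and follows essentially the same route as the paper: replace the last row of $M_\pi(x)-I$ by the all-ones row, show the resulting matrix $B$ is invertible using the vanishing column sums of $M-I$ together with the one-dimensionality of $\ker(M-I)$, and then read off $\mu_k=\det B_k/\det B$ by Cramer's rule. The only differences are minor: you justify the one-dimensional kernel with a self-contained positive-part argument where the paper cites Perron--Frobenius, and you note that the last row is always redundant, so no ``without loss of generality'' choice of row is needed.
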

\begin{proof}
    All we need to do is prove that the steady state vector is a real analytic function of the elements of $M_\pi(x)$ and hence a real analytic function of $x$. We will use $\vec{\mu} = \vec{\mu}_\pi(x)$ to denote this steady state which is the solution to the equation $(M_\pi(x) - 1) \vec{\mu} = 0$. However, since $M_\pi(x)$ has an eigenvalue equal to 1 (by virtue of being aperiodic and irreducible) we have that $\text{det}(M_\pi(x) -1) =0$, meaning that we cannot apply Cramer's rule to express $\vec{\mu}$ in terms of $M_\pi(x)$. \\

We will use the fact the $\vec{\mu}$ must be a normalised probability vector to solve this problem. By the Perron-Frobenius theorem, we know that $M_\pi(x)$ has eigenvalues $|\lambda|\leq1$ with exactly one eigenvalue equal to 1. As a result, the matrix $M_\pi(x) -1$ has a 1-dimensional nullspace and therefore by the rank-nullity theorem has rank $(n-1)$. This means that one of the rows of $M_\pi(x)-1$ is not linearly independent of the others. Without loss of generality, we will assume that this is the last row. \\

Let us use $B$ to denote the matrix $M_\pi(x)-1$ with its last row replaced with all ones. First, we will show that the all-ones row is linearly independent of the other rows, meaning that $B$ is full rank and thus a nonzero determinant. We then apply Cramer's rule to express $\vec{\mu}$ in terms of $B$ (and hence, in terms of $M_\pi(x)$). \\

Assume that the all ones row is not linearly independent of the other rows. This would mean that there exists a vector $\vec{u} \neq\vec{0}$ such that $B \vec{u} =0$. Let us abbreviate $M_\pi(x)$ as $M$. Since $B$ and $M-1$ are the same for the first $n-1$ rows, this means that we also have $[(M-1)u]_i=0$ for $i = 1...n-1$. Since $M$ is a stochastic matrix, each of the columns of $M$ sum to one. This means that $\vec{1}^\intercal (M-1) = \vec{0}$ where $\vec{0}$ and $\vec{1}$ are the $n$-dimensional column vectors of all zeros and all ones respectively. As a result must have $\vec{1}^\intercal (M-1) \vec{v} = 0$ for any vector $\vec{v}$. Therefore:
\be
    \vec{1}^\intercal (M-1) \vec{u} = \sum_i [(M-1)\vec{u}]_i = 0 \, .
\ee
Since we have $[(M-1)u]_i=0$ for $i = 1...n-1$, this means that $[(M-1)\vec{u}]_i$ must also be zero meaning that $(M-1)\vec{u} = \vec{0}$. As we noted before, the nullspace of $M-1$ is 1-dimensional. We therefore require that $\vec{u}$ must be in the nullspace along with the steady state $\vec{\mu}$. Therefore it must be possible to express $\vec{u}=\alpha   \vec{\mu}$ using some scalar $\alpha$. \\

Now, notice that the last row of $B\vec{u}$ is simply the sum of the elements of $\vec{u}$, since the last row of $B$ is all ones. Simultaneously, we have assumed that $B\vec{u}=\vec{0}$ so we have $[B\vec{u}]_n = \sum_i^{n} u_i =0$. But, since we also have $\vec{u}=\alpha \vec{\mu}$, and $\sum_i \mu_i =1$, the only way for this to be true is if $\alpha=0$. But this contradicts our supposition that $\vec{u}\neq \vec{0}$, meaning that no such vector $\vec{u}$ exists. As a result, we can conclude that all rows of $B$ are linearly independent. Thus $B$ is full rank and invertible. \\

Finally, notice that, due to the normalisation of $\mu$, we have $B \vec{\mu} =  \vec{w}$ where $\vec{w} = (0,0,..1)^\intercal$. As a result, we can express the elements of $\mu$ using Cramer's rule:
\be
    \mu_i = \frac{\det{B_i}}{\det{B}} \, ,
\ee 
where $B_i$ is the matrix resulting from replacing the $i^{th}$ column of $B$ with $\vec{w}$. As we have proved that $B$ is invertible, we have that $\det{B}\neq0$ meaning that these expressions are well-defined. Since each of these expressions for $\mu_i$ are simply polynomials in the elements of the matrix $M-1$, the elements of $\mu_i$ are real analytic functions of the elements of $M_\pi(x)$ and hence real analytic functions of $x$.

\end{proof}
\section{Proof of Lemma \ref{lem:notzero}} \label{ap:notzero}
Since we have that $\pi_i$ and $\pi_j$ must disagree on what action to take in at least one state, we pick one state where they take different actions and call it $s_a$. Furthermore, since we have a non-constant reward function, we must have one state with the highest reward. Call this state $s_b$. It may be the case that $s_a$ and $s_b$ are the same states, or they may be different. Now consider the following environment $x'$. In this environment, in all states except $s_a$, all actions result in the next state being a random state from a uniform distribution over states. This means that the transition matrix elements $[M_{\pi_i}(s)]_{kl} = [M_{\pi_j}(s)]_{kl}=\frac{1}{n}$ for all $k,l$ except $l=a$. \\

In state $s_a$, the action taken by $\pi_i$ has a high probability of $1-\epsilon$ of transitioning the system to state $s_b$ and a $\frac{\epsilon}{n-1}$ probability of transitioning to each of the other states. On the other hand, the action taken by state $\pi_j$ in state $s_a$ leads to each of the other states with equal probability $\frac{1}{n}$. This means that every element of $M_{\pi_j}(x)$ is $\frac{1}{n}$. \\

For $\pi_i$, on the other hand, every element of $M_{\pi_i}(x)$ is $\frac{1}{n}$ \emph{except for} the $a^{th}$ column. In the $a^{th}$ column of $M_{\pi_i}(x)$, every element is $\frac{\epsilon}{n-1}$ except for the element in the $a^{th}$ row, which is $1-\epsilon$. By computing the eigenvectors of $M_{\pi_i}(x')$ and $M_{\pi_j}(x')$, it is then straightforward to verify that the steady state distribution for $\pi_j$ is the unform distribution over all $n$ states. Whereas the steady state distribution for $\pi_i$ has probability mass $\frac{1}{n \epsilon +1}$ on state $s_b$ and probability mass $\frac{1}{(n \epsilon +1)(n-1)}$ on each of the other states. Since $s_b$ has a higher reward than all other states, we can conclude that policy $\pi_i$ achieves a higher performance than $\pi_j$ in environment $x'$. This means that $f_{ij}(x')>0$, which proves that $f_{ij}(x)$ is not identically zero. 

\bibliographystyle{unsrt}
\bibliography{ModelBits.bib}
\end{document}